%%%%%%%%%%%%%%%%%%%%%%% file template.tex %%%%%%%%%%%%%%%%%%%%%%%%%
%
% This is a general template file for the LaTeX package SVJour3
% for Springer journals.          Springer Heidelberg 2010/09/16
%
% Copy it to a new file with a new name and use it as the basis
% for your article. Delete % signs as needed.
%
% This template includes a few options for different layouts and
% content for various journals. Please consult a previous issue of
% your journal as needed.
%
%%%%%%%%%%%%%%%%%%%%%%%%%%%%%%%%%%%%%%%%%%%%%%%%%%%%%%%%%%%%%%%%%%%
%
% First comes an example EPS file -- just ignore it and
% proceed on the \documentclass line
% your LaTeX will extract the file if required
% [arxiv_v2: filecontents  stripped, 188 chars]
\RequirePackage{fix-cm}
\documentclass[smallcondensed,natbib]{svjour3}     % onecolumn (ditto)
\smartqed  % flush right qed marks, e.g. at end of proof
\usepackage{graphicx}

\usepackage{url}
\usepackage{color}
\usepackage{graphicx}
\usepackage{amssymb,amsfonts}
\usepackage{amsmath,amssymb}
\usepackage{float}
\usepackage{comment}
\usepackage{verbatim}
\usepackage{appendix,stfloats}
\usepackage{bbm}
\usepackage{blkarray}

\usepackage{wrapfig}
\usepackage{epsfig}
\usepackage{amsmath,amssymb,graphicx,url,mathtools}
\usepackage{algorithm}
\usepackage{algorithmic}
\usepackage{hyperref}
\usepackage{enumerate}
\usepackage{subfigure,color}
\usepackage{array,dsfont}

\usepackage{wrapfig}

\newcommand{\argmin}{\operatornamewithlimits{argmin}}

\def \bg{\mathbf g}

\newtheorem{thm}{Theorem}[section]
\newtheorem{prop}[thm]{Proposition}

\newtheorem{cor}[thm]{Corollary}

\def \cD {\mathcal{D}}

\def \bal {\begin{align}}
\def \eal {\end{align}}

\def \bx {\mathbf{x}}

\newcommand{\indicator}[1]{\mathds{1}_{\left[ {#1} \right] }}

\usepackage{graphicx}

%
% \usepackage{mathptmx}      % use Times fonts if available on your TeX system
%
% insert here the call for the packages your document requires
%\usepackage{latexsym}
% etc.
%
% please place your own definitions here and don't use \def but
% \newcommand{}{}
%
% Insert the name of "your journal" with
% \journalname{myjournal}
%
\begin{document}

\title{Sensor Selection by Linear Programming%\thanks{Grants or other notes
%about the article that should go on the front page should be
%placed here. General acknowledgments should be placed at the end of the article.}
}
%\subtitle{Do you have a subtitle?\\ If so, write it here}

%\titlerunning{Short form of title}        % if too long for running head

\author{Joseph Wang         \and
        Kirill Trapeznkov \and Venkatesh Saligrama %etc.
}

%\authorrunning{Short form of author list} % if too long for running head

\institute{Joseph Wang \at
              8 St. Mary's Street, Boston, MA 02215, USA\\
              Tel.: (617) 353-2811\\
              Fax: (617) 353-7337\\
              \email{joewang@bu.edu}           %  \\
%             \emph{Present address:} of F. Author  %  if needed
           \and
           Kirill Trapeznikov \at
              \email{kirill.trapeznikov@stresearch.com}
			\and
            Venkatesh Saligrama \at
            \email{srv@bu.edu}
}

\date{Received: date / Accepted: date}
% The correct dates will be entered by the editor

\maketitle

\begin{abstract}
We learn sensor trees from training data to minimize sensor acquisition costs during test time. Our system adaptively selects sensors at each stage if necessary to make a confident classification. We pose the problem as empirical risk minimization over the choice of trees and node decision rules. We decompose the problem, which is known to be intractable, into combinatorial (tree structures) and continuous parts (node decision rules) and propose to solve them separately. Using training data we greedily solve for the combinatorial tree structures and for the continuous part, which is a non-convex multilinear objective function, we derive convex surrogate loss functions that are piecewise linear. The resulting problem can be cast as a linear program and has the advantage of guaranteed convergence, global optimality, repeatability and computational efficiency. We show that our proposed approach outperforms the state-of-art on a number of benchmark datasets.
\keywords{Adaptive sensor selection \and Resource-constrained learning \and Test-time budgeted learning}
\end{abstract}

\section{Introduction}
%
%Much of machine learning has focused on improving performance of classification algorithms without accounting for costs. The focus of this paper is on
%
%learning decision rules that also account for test time costs in making decisions.  This problem can arise in
%
Many scenarios involve classification systems constrained by measurement acquisition budget. In this setting, a collection of sensor modalities with varying costs are available to the decision system. Our goal is to learn adaptive decision rules from labeled training data that when presented with a new unseen example would select the most informative and cost-effective strategy for the example. In contrast to non-adaptive methods \cite{efron2004least,xu2012greedy}, which attempt to identify a common sparse subset of sensors that can work well for all examples, our goal is an adaptive method that can classify typical cases using inexpensive sensors and using expensive sensors only for atypical cases.

We propose to learn a sensor tree using labeled training examples for making decisions on unseen test examples. The learned sensor tree is composed of internal node decision rules. Given an example these decision rules select sensors and guide an example along a particular path terminating at a leaf where it is classified with a classifier. We pose the problem as a global empirical risk minimization (ERM) over the choice of tree structures, node decision rules and leaf classifiers.

The general problem is a highly coupled problem consisting of combinatorial (sensor tree structure) and continuous components (decision rules to generalize to unseen examples) and difficult to optimize. To gain further insight we abstract away the generalization aspect and observe that the resulting combinatorial problem, a special case of ours, is known to be NP hard and requires greedy approximations~\cite{chakr07,cicalese}.

%Learning adaptive decision rules can be decomposed into three subproblems. The first problem that arises is finding subsets of features/sensors such that for every example, a subset of features effectively balancing classification performance and acquisition cost exists. Given these subsets of features, the next problem is determining a tree structure to determine which subset is applied to each example \textit{(How to describe determining the structure without learning functions?)}. Finally, given the feature subsets and tree structure, decision functions must be learned at each internal node of the tree to generalize the adaptive decision rule to unseen examples during test time.

The combinatorial issue can be circumvented in cases where expert knowledge exists, only a few sensors (as in \cite{trapeznikov:2013,trapeznikov:2013b,wang2014lp}) exist or for small depth trees. For these latter two cases we contruct an exhaustive tree and globally learn decision functions using a linear program by generalizing the cascade structures presented in \cite{trapeznikov:2013b,wang2014lp} to binary trees, resulting in more flexible decision systems. Convex surrogates of products of indicators have previously been studied for supervised learning \cite{wang2013locally}.

%In the case where expert knowledge does not exist and the number of feature subsets is sufficiently large such that exhaustive search is infeasible, the problem complexity grows immensely. Independent of structure and generalization to test data, determining feature subsets to balance classification performance and acquisition cost over training examples is a fundamentally hard problem \textbf{REF?}. Rather than attempt to solve the three subproblems simultaneously, we instead propose an approach to learning adaptive decision rules by decomposing the problem into the three aforementioned subproblems and applying efficient approximations.
For more general cases we propose a two-step approach to decouple the issue of sensor structure from the decision rule design. 
\begin{itemize}
\item We greedily solve for the combinatorial tree structure and obtain feature/sensor sub-collections efficiently by a greedy approximation to the NP-hard problem. From these subsets, we construct a binary tree using hierarchical clustering of feature subsets.
\item On the learned tree structure, our problem now reduces to the ERM problem discussed above for a fixed tree where we apply a novel surrogate, allowing us to jointly learn the decision functions in the tree by solving a linear program.
\end{itemize}

In the experiments, we demonstrate performance of our approach both for when feature subsets and tree structure is given and when feature subsets and tree structure must be learned. We show on real world data that our approach outperforms previously proposed approaches to budgeted learning.

\subsection{Related Work}
There is an extensive literature on adaptive methods for sensor selection for reducing test-time costs. It arguably originated with detection cascade (see \cite{zhang:2010,chen:2012} and references therein), which is a popular method in reducing computation cost in object detection for cases with highly skewed class imbalance and generic features. Computationally cheap features are used at first to filter out negative examples and the more expensive features are used in the later stages.% of the cascade. %This structure has proven to be effective for cases with highly skewed class imbalance and generic features. It is less suited for balanced and/or multi-class cases.

Our technical approach is closely related to \cite{trapeznikov:2013b} and \cite{wang2014lp}. Like us they formulate an ERM problem and generalize detection cascades to classifier cascades and handle balanced and/or multi-class scenarios. Like us, \cite{wang2014lp} construct convex surrogates for their empirical risk functions and propose efficient LP solutions. Unlike us their approach is limited to cascades of known structure and cannot handle trees and unknown sensor structures. %We not only extend cascades to trees but also learn sensor tree structure.

Conceptually, our work is closely related to ~\cite{xu2013cost} and \cite{kusner2014feature}, who introduced cost-sensitive tree of classifiers (CSTC) for reducing test time costs. Like our paper they proposed a global ERM problem. They solve for the tree structure, internal decision rules and leaf classifiers jointly using alternative minimization techniques. Recently, \cite{kusner2014feature} propose a more efficient version of CSTC. In contrast we decompose our global objective and separately solve the individual parts. The disadvantage of our decoupled approach is somewhat offset by globally convergent solution for the decision rules once a structure is determined. Nevertheless, which approach is better is an important question that must be addressed but outside the scope of this work.

\nocite{wang2014model}

%Cost-sensitive trees have also been proposed by Xu et. al. \cite{xu2013cost}. Similar to our work they formulate an empirical risk objective on trees, however the structure of the tree is fixed a priori, with the sensor subsets and decision functions learned simultaneously in the training phase. In attempting to jointly solve these two difficult subproblems, a highly non-convex objective arises, which they solve using alternative minimization.

%Our work is most closely related to the work of Wang et.al. \cite{wang2014lp}, who reformulate the generalized detection cascade problem, leading to upper-bounding convex surrogate functions for the generalized detection cascade. This approach is limited to the case where the sensor subsets and structure are limited to a cascade and are given a priori. We follow their line of work, reformulating the empirical risk problem on trees and developing convex surrogates that bounds the global empirical risk on general trees. Additionally, we introduce novel approaches to learning the sensor subsets and tree structure, allowing for our approach to be applied to a wider class of problems.

The subject of this paper is broadly related to other adaptive methods in the literature but unlike us these methods do not learn sensor trees but learn policies from training data. Generative methods pose the problem as a POMDP, learn conditional probability models \cite{zubek:2002,Sheng06featurevalue,bilgic:2007,ji:2007,kanani:2008,kapoor:2009,gao2011active}
and myopically select features based information gain of unknown features.
MDP-based methods ~\cite{karayev2013dynamic},  ~\cite{dulac2011datum}, \cite{he2012imitation}, \cite{busa2012fast}
encode current observations as state, unused features as action space, and formulate various reward functions to account for classification error and costs.
He et. al. \cite{he2012imitation} apply imitation learning of a greedy policy with a single classification step as actions. ~\cite{dulac2011datum} and \cite{karayev2013dynamic} apply reinforcement learning to solve this MDP. \cite{busa2012fast} propose classifier cascades within an MDP framework. They consider a fixed-ordering of features and extend sequential boosted classifier with an additional skip action.

\section{Problem Formulation: Global Empirical Risk Minimization Objective}\label{sec.struct}
We consider learning an adaptive decision system with training examples $(x_1,y_1),\allowbreak \ldots,\allowbreak(x_N,y_N)$ with $L$ sensors and acquisition cost $c_m,\,m=1,\,2,\ldots,L$. We can pose the problem of learning a rooted sensor tree as an ERM problem.
%For exposition we consider binary decision trees with $K$ leaves and $K-1$ internal nodes but our setup extends to general trees.
Our system is composed of three components, a tree, $T$, decision rules $\mathbf{g}_J=\{g_j\}_{j=1}^J,\, \bg_j \in {\cal G}_j$ associated with $J$ internal nodes, and classifier $\mathbf{f}_K=\{f_k\}_{k=1}^K,\,f_k \in {\cal F}_k$ associated with leaves. Each internal node of the tree is associated with a sensor and its children denote available sensor choices. Each leaf, $k$, is associated with a sensor subset, $S_k$, and corresponds to the unique path from the root to the leaf. The decision rule, $g_j$, associated with node $j$ acts upon acquired measurements for an example and routes it to one of its children. By uniquely associating each leaf, $k$, with the sensor outputs $b_k$ we can write ERM as:%The tree $T$ encodes available sensor choices at each internal node, $j$, and the sub-collections, ${\cal S}$ of feature subsets, $S_k \in {\cal S}$ at each leaf. The decision rule $g_j$ associated with node $j$ acts only on the sensor measurements acquired along the path from the root node upto node $j$. The loss associated with this system is:
\begin{align}\label{eq.globalerm}
L({\cal S},\mathbf{f},\mathbf{g})=\sum_{i=1}^{N}\sum_{k=1}^{K}\left(\mathbbm{1}_{f_k(x_i)\neq y_i}+\alpha\sum_{m \in S_k}c_m\right)\mathbbm{1}_{\mathbf{g}_J(x_i)=b_k}\notag\\ \longrightarrow \min_{{\cal S}} \min_{\mathbf{f}_K} \min_{\bg_J} L({\cal S},\mathbf{f},\mathbf{g})
\end{align}
where $\alpha$ is a trade-off parameter balancing classification performance with sensor acquisition cost and ${\cal S}$ is the set of paths. An instance of this general problem has been considered in the literature and shown to be NP hard~\cite{chakr07} for the special case of discrete valued sensor measurements, arbitrarily powerful decision rules, $\bg_J$ and with separable leaves, i.e, features acquired corresponding to the leaf path uniquely \& correctly identifies the class. The authors develop greedy algorithms for approximating their solution.
In our setting we allow for continous valued high-dimensional measurements and so we cannot discretize our space. Furthermore we are not in a separable situation and the Bayes error is not zero. However, our general ERM problem is highly coupled, difficult to optimize. This motivates our proposed decomposition approach described below.

Indeed, assuming arbitrary families ${\cal G}_j$ in Eq.~\ref{eq.globalerm} leads to new insights that motivates our decomposition approach. The general problem reduced to a purely combinatorial structure learning with arbitrary ${\cal G}_j$ when we also have access to an oracle classifier $\mathbf{f}_K$ capable of classifying with any subset of acquired features.
The resulting problem while NP hard as before is amenable to greedy strategies. Alternatively, given a tree structure and the oracle classifier the optimization objective takes a multilinear form as evidenced by Eq.~\ref{eq.globalerm} which also lends itself to optimization strategies. The only issue remaining is that of an oracle classifier, which can be circumvented for our approach.
%However, this turns out not to be an issue for greedy tree learning since we can learn classifiers on-the-fly. For the decision learning problem we can a priori train the set of classifiers because the leaf nodes are all known.
This overall perspective justifies our approach: 
\begin{enumerate}
\item Learn the tree structure assuming powerful decision rules.
\item Learn decision functions $g_1,\ldots,g_{K-1}$ for the tree structure learned in Step 1.
\end{enumerate}

\subsection{Learning Tree Structures Greedily}
For simplicity we assume a binary sensor tree with $K$ leaves and $K-1$ internal nodes. Motivated by previous methods~\cite{gao2011active,xu2013cost} we assume that the number of leaves $K$ is small relative to the feature dimension. Our approach identifies a sub-collection of subsets of features ${\cal S}=\{S_1,\ldots,S_K\}$ from training data, and as such, the tree structure, $T$. Assuming arbitrarily powerful decision functions, $g_1,\ldots,g_{K-1}$, effectively implies we can route each example to its optimal subset. Furthermore, assuming that we have access to an oracle classifer, $f_j$'s we can predicted the class on any subset of features. Then the optimization loss of Eq.~\ref{eq.globalerm} associated with the subcollection ${\cal S}$  reduces to:
\begin{align}\label{eq.subsets_min}
L(S_1,\ldots,S_K)= \frac{1}{N}\sum_{i=1}^{N}\min_{j \in \{1,\ldots,K\}}\left[\mathbbm{1}_{f_{s_j}(x_i)\neq y_i} + \alpha \sum_{k \in S_j}c_k\right]
\end{align}

Even in the absence of noise, as noted before the problem of minimizing this loss is NP-hard and motivates greedy strategies. Additionally, we overcome the issue of an oracle classifier by learning it as we grow the tree greedily. While many greedy strategies exist in the literature for related objectives, they are not directly applicable to our setting\footnote{For instance~\cite{cicalese} proposes a submodular surrogate to leverage properties of Wolsey greedy algorithm but their surrogates require discrete sensor measurements.}. Our greedy algorithm is related to \cite{awasthi13}, who learn sparse trees/polynomials in the separable PAC setting and provide statistical and computational guarantees. We adapt their approach to our setting as follows: given a sub-collection of features we expand this subcollection if there is a sensor that can further reduce our loss. If no sensor is found we restart the process and look for a new subset of sensors.

\begin{algorithm}[H]
        \begin{algorithmic}
   \STATE {\bfseries Input:} Number of Subsets $K$
   \STATE {\bfseries Output:} Feature subsets, $s_1,\ldots,s_K$
   \STATE {\bfseries Initialize:} $s_1,\ldots,s_K=\emptyset$
   \FOR{k=1,\ldots,K}
    \STATE $j=\argmin_{j \in s_k^C}\mathcal{L}(s_1,...,s_k\cup j,...,s_K)$
   \WHILE{$L(s_1,...,s_k\cup j,...,s_K)<\mathcal{L}(s_1,...,s_k,...,s_K)$}
   \STATE $s_k=s_k\cup j$
   \STATE $j=\argmin_{j \in s_k^C}\mathcal{L}(s_1,...,s_k\cup j,...,s_K)$
   \ENDWHILE
   \ENDFOR
        \end{algorithmic}
    \caption{Sensor Subset Selection}
    \label{greedy_subset_algorithm}
\end{algorithm}

%
%
%   Observations  $x_1,\ldots,x_N$, Labels $y_1,\ldots,y_N$, Classification Algorithm $\mathcal{A}\left(S,(x_1,y_1),\ldots,(x_N,y_N)\right)\rightarrow f_S(\cdot)$,
%   \FORALL{d \in s_k^C}
%   \STATE $j=\argmin_{j \in s_k^C}\mathcal{L}(s_1,\ldots,s_{k-1},s_k\cup j,s_{k+1},\ldots,s_K)$
%   \ENDFORALL
%   \STATE {\bfseries 1.} Choose a node, $j \in G$ such that all children of $j$ are leaf nodes
%   \STATE {\bfseries 2.} For every example $i \in \{1,\ldots,N\}$, construct the weight vector $\vec{w}_i$ of costs associated with each action.
%   \STATE {\bfseries 3.} Learn the function $\pi^j=Learn\left((x_1,\vec{w}_1),\ldots,(x_N,\vec{w}_N))\right)$
%   \STATE {\bfseries 4.} Evaluate the $pi^j$ and assign a cost for each example $i$ at node $j$ as the cost of the leaf output by $\pi^j$ for $x_i$:
%   $$
%   c_j(x_i)=\vec{w}^j_i\left(\pi^j(x_i)\right)
%   $$
%   \STATE {\bfseries 5.} Remove all outgoing edges from node $j$ and remove all nodes disconnected from the root node
%   \STATE {\bfseries 6.} If graph $G$ consists of only one node, stop, otherwise return to Step 1.
%
%
%
%
\textbf{Tree Structure:} Given the set of sensor subsets $s_1,\ldots,s_K$, the problem of choosing a tree structure partitioning between these sensor subsets arises. We propose a hierarchical clustering approach, where subsets are grouped based on the number of common elements. Given a set of feature subsets, the two subsets with the highest number of common elements are grouped together and replaced in the set with the intersection of their elements. This is recursively repeated until only a single subset exists, resulting in a binary tree structure. This can be viewed as a generalization of the cascade approach, as given the set of feature subsets where an additional sensor is added to each previous subset, a cascade structure is always recovered.

Once a tree structure, $T$ is learned we need to populate it with decision functions so that we can generalize to unseen examples. Note that the learned sensor structure provides us with possible choices but does not tell us what choice to make on an unseen example. This motivates the following section.
\subsection{Empirical Risk Problem for a Fixed Tree}

%\begin{tabular}{cp{.6\linewidth}}
%\centering
%\raisebox{-30mm}{\includegraphics[height = .3 \linewidth]{}}
%&
%{$$
%\begin{array}{c}
%\mathbf P \\
%\begin{bmatrix}
%0 & 0 & 0 \\
%0 & 1 & 0 \\
%1 & 0 & 0 \\
%1 & 0 & 1
%\end{bmatrix}
%\end{array}
%-  \begin{array}{c}
%\mathbf N\\
%\begin{bmatrix}
%1 & 1 & 0 \\
%1 & 0 & 0 \\
%0 & 0 & 1 \\
%0 & 0 & 0
%\end{bmatrix}
%\end{array} =
%\begin{array}{c}
%\\
%\begin{bmatrix}
%-1 & -1 & ~0 \\
% -1 & +1 & ~0 \\
%+1 & ~0 & -1 \\
%+1 & ~0 & +1
%\end{bmatrix}
%\end{array}
%$$}
%\end{tabular}

\begin{figure}[ht]
\begin{minipage}{.5\textwidth}
\centering
\includegraphics[height = .5 \linewidth]{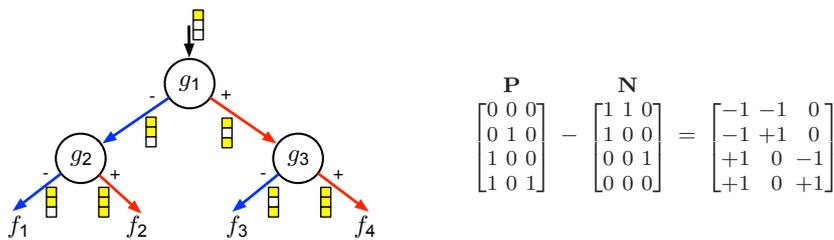}
\end{minipage}%
\begin{minipage}{.5\textwidth}
$$
\begin{array}{c}
\mathbf P \\
\begin{bmatrix}
0 & 0 & 0 \\
0 & 1 & 0 \\
1 & 0 & 0 \\
1 & 0 & 1
\end{bmatrix}
\end{array}
-  \begin{array}{c}
\mathbf N\\
\begin{bmatrix}
1 & 1 & 0 \\
1 & 0 & 0 \\
0 & 0 & 1 \\
0 & 0 & 0
\end{bmatrix}
\end{array} =
\begin{array}{c}
\\
\begin{bmatrix}
-1 & -1 & ~0 \\
 -1 & +1 & ~0 \\
+1 & ~0 & -1 \\
+1 & ~0 & +1
\end{bmatrix}
\end{array}
$$
\end{minipage}
\caption{An example decision system of depth two: node $g_1(x_1)$ selects either to acquire sensor $2$ for a cost $c_2$ or $3$ for a cost $c_3$. Node $g_2(x_1,x_2)$ selects either to stop and classify with sensors $\{1,2\}$ or to acquire 3 for $c_3$ and then stop. Node $g_3(x_1,x_3)$ selects to classify with $\{1,3\}$ or with $\{1,2,3\}$.}
\label{fig:tree}
\end{figure}
We represent our decision system as a binary tree. The binary tree is composed of $K$ leafs and $K-1$ internal nodes. At each internal node, $j=1, \ldots, K-1$, is a binary decision function, $\mbox{sign} [g_j(\bx)] \in \{+1,-1\}$. This function determines which action should be taken for a given example. The binary decisions, $g_j(\bx)$'s, represent actions from the following set: stop and classify with the current set of measurements, or choose which sensor to acquire next. Each leaf node, $k=1,\ldots K$, represents a terminal decision to stop and classify based on the available information\footnote{For notational simplicity, we denote applying a decision node and a leaf classifier as $g_j(\bx)$ and $f_k(\bx)$ respectively.} We assume that the leaf classifiers, $f_j$ are known and fixed\footnote{Note that the classifiers at each leaf, $f_k(\bx) \in \mathcal Y$, can be learned for the $K$ leaves once the tree structure, $T$, has been determined from the previous section.}. However note, the functions implicitly operate only on the sensors that have been acquired along the path to the corresponding node. The objective is to learn the internal decision functions: $g_j(\bx)$'s. We define the system risk:
\begin{equation}\label{eq.point_risk}
R(\bg,\bx,y)=\sum_{k=1}^{K}R_k(f_k,\bx,y)G_k(\bg,\bx)
\end{equation}
Here, $\bg= \{g_1, \ldots g_{K-1}\}$ is the set of decision functions. $R_k(f_k,\bx,y)=\mathds{1}_{f_k(\bx)\neq y}+\alpha \sum_{m \in S_k}c_m$ is the risk of making a decision at a leaf $k$. It consists of two terms: error of the classifier at the leaf and the cost of sensors acquired along the path from the root node to the leaf. $S_k$ is this set of sensors, and $\alpha$ is a parameter that controls trade-off between acquisition cost and classification error.
%\begin{align}
%R_k(f_k,\bx,y)=\mathds{1}_{f_k(\bx)\neq y}+\alpha \sum_{m \in S_k}c_m
%\end{align}
%
$G_k(\bg,\bx)\in\{0,1\}$ is a binary state variable indicating if $\bx$ is classified at the $k$th leaf. We compactly encode the path from the root to every leaf in terms of internal decisions, $g_j(\bx)$'s, by two auxiliary binary matrices: $\mathbf P$, $\mathbf N$ $\in \{0,1\}^{K \times K-1}$. If $\mathbf P_{k,j}=1$ then, on the path to leaf $k$, a decision node $j$ must be positive: $g_j > 0$. If $\mathbf N_{k,j}=1$ then on the path to leaf $k$, a decision at node $j$ must be negative: $g_j \leq 0$. A $k$th row in $\mathbf P$ and $\mathbf N$ jointly encode a path from the root node to a leaf $k$. The sign pattern for each path is obtained by $\mathbf P-\mathbf N$. For an example refer to Fig. \ref{fig:tree}. Using this path matrix, the state variable can be defined:
%\begin{align}
$G_k(\bg,\bx)= \prod_{j=1}^{K-1}[\mathds{1}_{g_j(\bx)>0}]^{\mathbf P_{k,j}}[\mathds{1}_{g_j(\bx)\leq 0}]^{\mathbf N_{k,j}}$.
%\end{align}

Our goal is to learn decision functions $g_1,\ldots,g_{K-1}$ that minimize the expected system risk:
%\begin{align}
$\min_{\bg}\mathds{E}_{\cD}\left[R(\bg,\bx,y)\right]$
%\end{align}
However, the model $\cD$ is assumed to be unknown and cannot be estimated reliably due to potentially high-dimensional nature of sensor outputs. Instead, we are given a set of $N$ training examples with full sensor measurements, $(\bx_1,y_1),\ldots,(\bx_N,y_N)$. We approximate the expected risk by a sample average over the data and construct the following ERM problem:
\begin{align}
\min_{\bg}\sum_{i=1}^{N}R(\bg,\bx_i,y_i)= \label{eq.erm_prod_ind} \sum_{i=1}^{N}\sum_{k=1}^{K} \overbrace{R_k(f_k,\bx_i,y_i)}^{\text{risk of leaf $k$}}\underbrace{\prod_{j=1}^{K-1}[\mathds{1}_{g_j(\bx_i)>0}]^{\mathbf P_{k,j}}[\mathds{1}_{g_j(\bx_i)\leq 0}]^{\mathbf N_{k,j}}}_{\text{$G_k(\cdot)=$ state of  $\bx_i$ in a tree}} %\nonumber
\end{align}

Note that by the definition of risk in \eqref{eq.point_risk}, the ERM problem can be viewed as a minimization over a function of indicators with respect to decisions: $g_1(\bx),\allowbreak \ldots, \allowbreak g_{K-1}(\bx)$.

%\begin{figure*}
%\begin{align}\label{eq.erm_prod_ind}
%\min_{g_1,\ldots,g_{K-1}}\sum_{i=1}^{N}\sum_{k=1}^{K} \underbrace{R_k(f_k,\bx_i,y_i)}_{\text{risk of leaf $k$}}\underbrace{\prod_{j=1}^{K-1}[\mathds{1}_{g_j(\bx_i)>0}]^{\mathbf P_{k,j}}[\mathds{1}_{g_j(\bx_i)\leq 0}]^{\mathbf N_{k,j}}}_{\text{state of $\bx_i$ in a tree, $G_k(\cdot)$}}
%\end{align}
%\end{figure*}

%A $k$th feature is extracted from a measurement acquired at $k$th stage. We define a truncated feature vector at $k$th stage: $\bx^k = \{x_1, x_2, \ldots x_k\}$.\footnote{For simplicity we refer to $x_k$ as a feature, however $x_k$ need not be a scalar feature and is often a set of features associated with the $k$th stage} Let $\cX^k$ be the space of the first $k$ features such that $\bx^k \in \cX^k$.

\section{Convex Re-formulation and Solution by Linear Programming}
A popular approach to solving ERM problems is to substitute indicators with convex upper-bounding surrogates, $\phi(z) \geq \indicator{z}$ and then to minimize the resulting surrogate risk. However, such strategy generally leads to a non-convex, multi-linear optimization problem. Previous attempts to solve problems of this form have focused on computationally costly alternating optimization approaches with guarantees of convergence only to a local minimum \cite{trapeznikov:2013b,nips_paper}. Rather than attempting to solve this non-convex surrogate problem, we instead reformulate the indicator empirical risk in \eqref{eq.erm_prod_ind} as a maximization over sums of indicators before introducing convex surrogate. Our approach yields a globally convex upper-bounding surrogate of the empirical loss function. In the next section, we derive this reformulation for a binary tree of depth 2 in Fig. \ref{fig:tree} before generalizing to arbitrary trees.

%\begin{figure}[htb!]
%\end{figure}

\subsection{Simple Tree Example}
Consider the decision system shown in Fig. \ref{fig:tree}. The goal is to learn the decision functions $g_1$, $g_{2}$, and $g_{3}$ that minimize the empirical risk \eqref{eq.erm_prod_ind}.

In reformulating the risk, it is useful to define the ''savings" for an example. The \emph{savings}, $\pi_k^i$, for an example $i$, represents the difference between the worst case outcome, $R_{max}$ and the risk $R_k(f_k,\bx_i,y_i)$ for terminating and classifying at the $k$th leaf. The worst case risk is acquiring all sensors and incorrectly classifying: $R_{max}=1+\alpha \sum_{m} c_m$.
\begin{align}
\pi_k^i= R_{max}- R_k(f_k,\bx_i,y_i)
       = \mathds{1}_{f_k(\bx_i)= y_i}+ \alpha \displaystyle \sum_{m \in S_k^C}c_m \label{eq.savings}
\end{align}
Here, $S_k^C$ is the complement set of sensors acquired along the path to leaf $k$ (the sensors not acquired on the path to leaf $k$). Note that the savings do not depend on the decisions, $g_j's$, that we are interested in learning.

For our example, there are only 4 leaf nodes and the state of terminating in a leaf is a encoded by a product of two indicators. For instance, to terminate in Leaf 1, $g_1(\bx_i) \leq 0$ and $g_2(\bx_i) \leq 0$. This empirical risk can be formulated by enumerating over the leaves and their associated risks:
\begin{align} \nonumber
R&(\bg,\bx_i,y_i)= \overbrace{\Big(R_{max}-\pi_1^i\Big)\mathds{1}_{g_1(\bx_i)\leq 0}\mathds{1}_{g_{2}(\bx_i)\leq 0}}^\text{Leaf 1} + \overbrace{\Big(R_{max}-\pi_2^i\Big)\mathds{1}_{g_1(\bx_i)\leq 0}\mathds{1}_{g_{2}(\bx_i)> 0}}^\text{Leaf 2}\\ &+  \underbrace{\Big(R_{max}-\pi_3^i\Big)\mathds{1}_{g_1(\bx_i)> 0}\mathds{1}_{g_{3}(\bx_i)\leq 0}}_\text{Leaf 3} + \underbrace{\Big(R_{max}-\pi_4^i\Big)\mathds{1}_{g_1(\bx_i)> 0}\mathds{1}_{g_{2}(\bx_i)> 0}}_\text{Leaf 4} \label{eq.example_prod}
\end{align}
%\begin{align}
%R(\bg,\bx_i,y_i)=& \label{eq.example_prod} \\
%  &\Big(R_{max}-\pi_1^i\Big)\mathds{1}_{g_1(\bx_i)\leq 0}\mathds{1}_{g_{2}(\bx_i)\leq 0} \Big \}~~\mbox{Leaf $1$} \notag \\
% +&\Big(R_{max}-\pi_2^i\Big)\mathds{1}_{g_1(\bx_i)\leq 0}\mathds{1}_{g_{2}(\bx_i)> 0}\Big \}~~\mbox{Leaf $2$}\notag\\
% +&\Big(R_{max}-\pi_3^i\Big)\mathds{1}_{g_1(\bx_i)> 0}\mathds{1}_{g_{3}(\bx_i)\leq 0}\Big \}~~\mbox{Leaf $3$} \notag \\
% +&\Big(R_{max}-\pi_4^i\Big)\mathds{1}_{g_1(\bx_i)> 0}\mathds{1}_{g_{2}(\bx_i)> 0}\Big \}~~\mbox{Leaf $4$} \notag
%\end{align}
Directly replacing every $\indicator{z}$ with an upper bounding surrogate such as a hinge loss, $\max[0,1+z] \geq \indicator{z}$, produces a non-convex bilinear objective due the indicator product terms. Bilinear optimization is computationally intractable to solve globally. %Previous approaches to solving these types of problems have centered around finding a local minimum of the bilinear objective function using alternating optimization \cite{trapeznikov:2013b,bilinear_separation,nips_paper}.

Rather than directly substituting surrogates and solving the non-convex minimization problem, we reformulate the empirical risk with respect to the indicators in the following theorem:
\begin{thm}\label{thm.erm_reform}
The empirical risk in \eqref{eq.example_prod} is equal to \eqref{eq.example_max}.
\begin{align}
R(g_1,g_{2},g_{3},\bx_i,y_i)= R_{max}-\sum_{j=1}^4\pi_j^i&+
\max \Big [(\pi_3^i+\pi_4^i) \mathds{1}_{g_1(\bx_i)\leq 0}+\pi_2^i\mathds{1}_{g_{2}(\bx_i)\leq 0},\notag\\
(\pi_3^i+\pi_4^i)\mathds{1}_{g_1(\bx_i)\leq 0}+\pi_1^i\mathds{1}_{g_{2}(\bx_i)> 0},&
(\pi_1^i+\pi_2^i)\mathds{1}_{g_1(\bx_i)> 0}+\pi_4^i\mathds{1}_{g_{2}(\bx_i)\leq 0},\notag\\
&(\pi_1^i+\pi_2^i)\mathds{1}_{g_1(\bx_i)> 0}+\pi_3^i\mathds{1}_{g_{3}(\bx_i)> 0}\Big]\label{eq.example_max}
\end{align}
\end{thm}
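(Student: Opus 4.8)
The plan is to reduce the claimed identity to a short case analysis over which leaf an example is routed to, exploiting the fact that the tree's indicator products partition every example into exactly one leaf. First I would introduce the shorthand binary variables $a=\mathds{1}_{g_1(\bx_i)\le 0}$, $b=\mathds{1}_{g_2(\bx_i)\le 0}$, $c=\mathds{1}_{g_3(\bx_i)\le 0}$, each in $\{0,1\}$, so that the complementary indicators are $1-a$, $1-b$, $1-c$. Because $g_1,g_2,g_3$ take real values, exactly one member of each complementary pair equals $1$, and a direct expansion shows that the four leaf products $ab,\ a(1-b),\ (1-a)c,\ (1-a)(1-c)$ sum to $1$; hence exactly one leaf $k^\ast$ is ``active'' for the example. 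Substituting into \eqref{eq.example_prod} and writing $R_{max}=(R_{max}-\pi_{k^\ast}^i)+\pi_{k^\ast}^i$, the left-hand side collapses to $R(\bg,\bx_i,y_i)=R_{max}-\pi_{k^\ast}^i$.

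Next I would rewrite the target right-hand side \eqref{eq.example_max} as $R_{max}-\sum_{j=1}^4\pi_j^i+\max_\ell T_\ell$, where each $T_\ell$ denotes one of the four bracketed expressions. Since $-\pi_{k^\ast}^i=-\sum_j\pi_j^i+\sum_{j\ne k^\ast}\pi_j^i$, the theorem is equivalent to the single scalar identity
\[
\max_{\ell} T_\ell \;=\; \sum_{j\ne k^\ast}\pi_j^i .
\]
I would verify this by splitting into the four cases $k^\ast\in\{1,2,3,4\}$ selected by the value of $a$ and then $b$ or $c$. For example, when $a=1,b=1$ (Leaf $1$ active) the terms reduce to $T_1=\pi_2^i+\pi_3^i+\pi_4^i$, $T_2=\pi_3^i+\pi_4^i$, $T_3=\pi_4^i c$, $T_4=\pi_3^i(1-c)$, whose maximum is $T_1=\sum_{j\ne 1}\pi_j^i$; the other three cases are structurally symmetric.

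The step I expect to be the main obstacle is confirming, in every case, that the maximizing term is precisely the one accumulating the savings of all \emph{non}-active leaves and nothing more. This hinges on the non-negativity of the savings, $\pi_k^i=\mathds{1}_{f_k(\bx_i)=y_i}+\alpha\sum_{m\in S_k^C}c_m\ge 0$: non-negativity guarantees that the term matching the actual path (which collects the opposite-subtree savings from the root split plus the sibling-leaf saving from the second split) dominates, while the complementary terms stay capped below the target sum. Once this dominance is checked, combining $R(\bg,\bx_i,y_i)=R_{max}-\pi_{k^\ast}^i$ with the boxed identity yields \eqref{eq.example_max}. I would also flag at the outset that the indicator subscripts in \eqref{eq.example_prod} and \eqref{eq.example_max} must be read according to the tree of Fig.~\ref{fig:tree}, i.e.\ terms carrying $\mathds{1}_{g_1(\bx_i)>0}$ use the right-child node $g_3$ rather than $g_2$, since the identity holds only for the structurally correct terms.
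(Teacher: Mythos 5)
Your proof is correct, and it reaches the result by a somewhat different route than the paper. The paper argues constructively: it rewrites each product of indicators using $\mathds{1}_{A}\mathds{1}_{B}=\min\left[\mathds{1}_{A},\mathds{1}_{B}\right]$ and $\mathds{1}_{A}=1-\mathds{1}_{\bar{A}}$, obtaining an intermediate sum-of-maximizations form (Eq.~\eqref{eq.proofstep}), and then collapses that sum into the single maximization of \eqref{eq.example_max} by noting that exactly one path is consistent with the signs of $g_1,g_2,g_3$. You bypass the intermediate form entirely: you use the fact that the four leaf products partition unity to evaluate the left-hand side outright as $R_{max}-\pi_{k^\ast}^i$, reduce the theorem to the scalar identity $\max_{\ell}T_\ell=\sum_{j\neq k^\ast}\pi_j^i$, and verify that identity in the four cases. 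The substantive core---a case analysis over the active leaf plus a dominance argument---is the same, but your version is more elementary and, importantly, makes explicit the hypothesis the paper leaves implicit: the term matching the actual path dominates the other three \emph{only because} the savings are non-negative, $\pi_k^i\geq 0$ (which holds since $\alpha\geq 0$ and the costs $c_m$ are non-negative). The paper's ``due to the interdependencies \ldots will be the maximizer'' and the appendix's ``by examination'' silently rely on exactly this fact, so your proof fills a small gap in the paper's own argument. What the paper's constructive rewriting buys in exchange is a derivation that transfers mechanically to arbitrary binary trees (Corollary~\ref{cor:tree_emp_risk}), where one would otherwise have to guess the max-of-sums form before verifying it. Your closing remark about notation is also correct: as printed, the leaf-4 term of \eqref{eq.example_prod} and the third term of \eqref{eq.example_max} write $g_2$ where the tree of Fig.~\ref{fig:tree} (and the appendix, which uses $g_{21},g_{22}$) requires $g_3$; the identity holds under that corrected reading.
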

\begin{proof}
Here, we provide a brief sketch of the proof. For full details please refer to the Appendix. We utilize the following two identities: $\indicator{A}\indicator{B} = \min [ \indicator{A}, \indicator{B}]$ and $\indicator{A}=1-\indicator{\bar A}$ and express the risk in \eqref{eq.example_prod} in terms of maximizations:
%\begin{align}
%R & (g_1,g_{2},g_{3},\bx_i,y_i)=R_{max}\\
%& -\pi_1^i \min\left(\mathds{1}_{g_1(\bx_i)\leq 0},\mathds{1}_{g_{2}(\bx_i)\leq 0}\right)\notag\\
%& -\pi_2^i \min\left(\mathds{1}_{g_1(\bx_i)\leq 0},\mathds{1}_{g_{2}(\bx_i)> 0}\right)\notag\\
%& -\pi_3^i \min\left(\mathds{1}_{g_1(\bx_i)> 0},\mathds{1}_{g_{3}(\bx_i)\leq 0}\right)\notag\\
%& -\pi_4^i \min\left(\mathds{1}_{g_1(\bx_i)> 0},\mathds{1}_{g_{3}(\bx_i)> 0}\right)\notag
%\end{align}
%By flipping the inequalities in the arguments of the indicators, the minimizations are converted to maximizations:
\begin{align}\label{eq.proofstep}
R \left(g_1,g_{2},g_{3},\bx_i,y_i\right)=R_{max} -\sum_{j=1}^4\pi_j^i
+\pi_1^i \max\left(\mathds{1}_{g_1(\bx_i)> 0},\mathds{1}_{g_{2}(\bx_i)> 0}\right)\notag\\
+\pi_2^i \max\left(\mathds{1}_{g_1(\bx_i)> 0},\mathds{1}_{g_{2}(\bx_i)\leq 0}\right)
 +\pi_3^i \max\left(\mathds{1}_{g_1(\bx_i)\leq 0},\mathds{1}_{g_{3}(\bx_i)> 0}\right)\notag\\
 +\pi_4^i \max\left(\mathds{1}_{g_1(\bx_i)\leq 0},\mathds{1}_{g_{3}(\bx_i)\leq 0}\right) 
\end{align}
Recall that the signs of $g_1,g_2,g_3$ encode a unique path for $\bx_i$. So let us consider sign patterns for each path. For instance, to reach leaf 1, $g_1 \leq 0$ and $g_2 \leq 0$. In this case, by inspection of \eqref{eq.proofstep}, the risk is $(\pi_3^i+\pi_4^i) \indicator{ g_1 (\bx_i) \leq 0} + \pi_2^i \indicator{g_2(\bx_i) \leq 0}+$ constants. This is exactly the first term in the maximization in \eqref{eq.example_max}. We can perform such computation for each leaf (term in the max) in a similar fashion. And due to the interdependencies in \eqref{eq.proofstep}, the term corresponding to a valid path encoding will be the maximizer in \eqref{eq.example_max}.
%
%that the set of active terms within the maximization functions is not unique. We further simplify this expression by limiting the active terms within the maximization by examining the decision function outputs that define pathes from the root node to each leaf (See Supplementary Material for further details).
\end{proof}

{\bf Risk Interpretability:} Intuitively, in the reformulated empirical risk in \eqref{eq.example_max}, each term in the maximization encodes a path to one of the $K$ leaves. The largest (active) term correspond to the path induced by the $g_j$'s for an example $\bx_i$. Additionally, the weights on the indicators in \eqref{eq.example_max} represent the \emph{savings lost} if the argument of the indicator is active. For example, if the decision function $g_1(\bx_i)$ is negative, leaves $3$ and $4$ cannot be reached by $\bx_i$, and therefore $\pi_3^i$ and $\pi_4^i$, the savings associated with leaves $3$ and $4$, cannot be realized and are lost.

A distinct advantage of the reformulated risk in \eqref{eq.example_max} arises when replacing indicators with convex upper-bounding surrogates of the form $\phi(z)\geq \mathds{1}_{z\leq0}$. Introducing such surrogates in the original risk in \eqref{eq.example_prod} produces a bilinear function for which a global optimum cannot be efficiently found. In contrast, introducing convex surrogate functions in \eqref{eq.example_max} produces a convex upper-bound for the empirical risk.

\subsection{Extension to Arbitrary Binary Trees}
In this section, we generalize the empirical risk reformulation for any binary tree and present a convex surrogate. Consider a binary tree, $\mathcal T$, composed of $K-1$ internal nodes and $K$ leaves. As defined in \eqref{eq.savings}, each leaf has a corresponding savings $\pi_k^i$ that captures the difference between the worst case risk and the risk of classifying at leaf $k$.

% respectively. %In the same fashion as the example shown above, our goal is to learn the set of decision functions, $g_1,\ldots,g_{K-1}$ that minimize the empirical risk over a set of training examples, $(x_1,y_1),\ldots,(x_N,y_N)$.
%For illustration, path matrix $P$ for the example presented in the previous section is shown below:
%\[
%\begin{blockarray}{ccccccc}
%\text{Leaf} & \mathds{1}_{g_1>0} & \mathds{1}_{g_{21}>0} & \mathds{1}_{g_{22}>0} & \mathds{1}_{g_{1}<0} & \mathds{1}_{g_{21}<0} & \mathds{1}_{g_{22}<0}\\
%\begin{block}{c[cccccc]}
%\text{1} & 0 & 0 & 0 & 1 & 1 & 0\\
%\text{2} & 0 & 1 & 0 & 1 & 0 & 0\\
%\text{3} & 1 & 0 & 0 & 0 & 0 & 1\\
%\text{4} & 1 & 0 & 1 & 0 & 0 & 0\\
%\end{block}
%\end{blockarray} \,\,.
%\]
Note that in the previous example, the risk in \eqref{eq.erm_prod_ind} consists of a max of $K$ terms. Each term is a weighted linear combinations of indicators, and each weight corresponds to the \emph{savings lost} if the decision inside the indicator argument is true. For an arbitrary binary tree of $K$ leaves, the risk has an analogous form.

Before stating the result, we define the weights for the linear combination in each term of the max. For an internal node $j$, we denote $C_j^n$ as the set of leaf nodes in a subtree corresponding to a negative decision $g_j(\bx)\leq 0$. And $C_j^p$ is the set of leaf nodes in a subtree corresponding to a positive decision. For instance in Fig. \ref{fig:tree}, $C_1^p=\{Leaf~3,Leaf~4\}$, and in our example \eqref{eq.example_max}, the weight multiplying $\indicator{g_1(\bx_i) \leq 0}$ is the sum of these savings for leaves 3 and 4 (i.e. savings lost if $g_1 \leq 0$). Therefore, sets $C^{p/n}_j$ define which $\pi^i_k$'s contribute to a weight for a decision term.

For a compact representation, recall that the $k$th rows in matrices $\mathbf P$ and $\mathbf N$ define a path to leaf $k$ in terms of $g_1, \ldots, g_{K-1}$, and a non-zero $\mathbf P/\mathbf N_{k,j}$ indicates if $g_j \lessgtr 0$ is on the path to leaf $k$. So for each $\bx_i$ and each leaf $k$, we introduce two positive weight row vectors of length $K-1$:

\begin{align*}
\mathbf w^i_{n,k} =
 \left [\mathbf N_{k,1} \sum_{l \in C_1^p}\pi_l^i,\ldots, \mathbf N_{k,K-1} \sum_{l \in C_{K-1}^p}\pi_l^i \right ]\\
\mathbf w^i_{p,k} = \left [ \mathbf P_{k,1} \sum_{l \in C_1^n} \pi_l^i, \ldots, \mathbf P_{k,K-1} \sum_{l \in C_{K-1}^n} \pi_l^i \right ]
\end{align*}

The $j$th component of $\mathbf w^i_{n,k}$ multiplies $\indicator{g_j(\bx_i) \leq 0}$ in the term corresponding to the $k$th leaf. For instance in our $4$ leaf example in \eqref{eq.example_max}, $\left ( \mathbf w^i_{n,1} \right)_1=\pi^i_3+\pi^i_4$. If $\mathbf P/\mathbf N_{k,j}$ is zero then decision $g_j \gtrless 0$ is not on the path to leaf $k$ and the weight is zero. Using these weight definitions, the empirical risk in \eqref{eq.example_max} can be extended to arbitrary binary trees:
\begin{cor}\label{cor:tree_emp_risk}
The empirical risk of tree $\mathcal T$ is:
\begin{align}\label{eq.erm_tree}
R(\bg,\bx_i,y_i)=R_{max}-\sum_{k=1}^K \pi^i_k \notag\\
+\max_{k \in \{1, \ldots, K\}}  \mathbf w_{p,k}^i \begin{bmatrix} \mathds{1}_{g_1(\bx_i)>0}\\ \vdots \\ \mathds{1}_{g_{K-1}(\bx_i)>0} \end{bmatrix}
+\mathbf w_{n,k}^i \begin{bmatrix}
 \mathds{1}_{g_1(\bx_i)\leq 0}\\ \vdots \\ \mathds{1}_{g_{K-1}(\bx_i)\leq 0}
\end{bmatrix}
\end{align}
\end{cor}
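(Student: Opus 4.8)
The plan is to fix an arbitrary training example $\bx_i$ together with a sign assignment to all decisions $g_1(\bx_i),\ldots,g_{K-1}(\bx_i)$ and to show that the two sides of \eqref{eq.erm_tree} agree pointwise; summing over $i$ then gives the result. Since the signs of the $g_j$'s encode a unique root-to-leaf path, exactly one leaf, call it $k^\star$, satisfies $G_{k^\star}(\bg,\bx_i)=1$, and $G_k(\bg,\bx_i)=0$ for all $k\neq k^\star$. Hence the product form \eqref{eq.erm_prod_ind} collapses to the single term $R(\bg,\bx_i,y_i)=R_{k^\star}(f_{k^\star},\bx_i,y_i)=R_{max}-\pi_{k^\star}^i$. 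It therefore suffices to prove that the maximization on the right-hand side of \eqref{eq.erm_tree} evaluates to $\sum_{l\neq k^\star}\pi_l^i$, since then the entire right-hand side becomes $R_{max}-\sum_{k}\pi_k^i+\sum_{l\neq k^\star}\pi_l^i=R_{max}-\pi_{k^\star}^i$, matching the left-hand side.

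Write $T_k^i=\mathbf w_{p,k}^i[\mathds{1}_{g_j(\bx_i)>0}]_j+\mathbf w_{n,k}^i[\mathds{1}_{g_j(\bx_i)\leq 0}]_j$ for the $k$th argument of the max. By the definitions of $\mathbf w_{p,k}^i$ and $\mathbf w_{n,k}^i$, only nodes $j$ lying on the path to leaf $k$ contribute (the others have $\mathbf P_{k,j}=\mathbf N_{k,j}=0$), and such a node contributes the savings $\sum_{l\in C_j^n}\pi_l^i$ when $\mathbf P_{k,j}=1$ and $g_j(\bx_i)>0$, or $\sum_{l\in C_j^p}\pi_l^i$ when $\mathbf N_{k,j}=1$ and $g_j(\bx_i)\leq 0$. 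First I evaluate the \emph{active} term $T_{k^\star}^i$: by definition of $k^\star$, every decision along its path agrees with the prescribed direction, so every indicator in $T_{k^\star}^i$ equals $1$ and $T_{k^\star}^i$ is the sum, over all nodes $j$ on the path to $k^\star$, of the savings of the off-path subtree hanging at $j$. The key tree-combinatorial fact is that these off-path subtrees are pairwise disjoint and their leaves partition $\{1,\ldots,K\}\setminus\{k^\star\}$; consequently $T_{k^\star}^i=\sum_{l\neq k^\star}\pi_l^i$, exactly the value required.

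It remains to show that $k^\star$ attains the maximum, i.e. $T_k^i\leq T_{k^\star}^i$ for every $k\neq k^\star$; this is the main obstacle. I will argue through the deepest common ancestor $a$ of $k$ and $k^\star$. Above $a$ the two paths coincide, so each such node contributes the same off-path savings to $T_k^i$ as to $T_{k^\star}^i$. At node $a$ the paths diverge; since the realized decisions send $\bx_i$ toward $k^\star$, the indicator appearing in $T_k^i$ at $a$ is inactive and $a$ contributes nothing to $T_k^i$, while every remaining node on the path to $k$ lies inside the subtree of $a$ on $k$'s side, so its off-path savings involve only leaves of that subtree. On the $k^\star$ side, node $a$ already contributes to $T_{k^\star}^i$ the \emph{full} savings of that same $k$-side subtree. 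Using the nonnegativity of the savings, $\pi_l^i=\mathds{1}_{f_l(\bx_i)=y_i}+\alpha\sum_{m\in S_l^C}c_m\geq 0$, the combined at-and-below-$a$ contributions to $T_k^i$ cannot exceed what node $a$ alone contributes to $T_{k^\star}^i$; hence $T_k^i\leq T_{k^\star}^i$. This yields $\max_k T_k^i=T_{k^\star}^i=\sum_{l\neq k^\star}\pi_l^i$ and completes the identity.

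An equivalent, more algebraic route mirrors the proof of Theorem~\ref{thm.erm_reform}: expand $G_k$ using $\prod_j\mathds{1}_{A_j}=\min_j\mathds{1}_{A_j}$ and $\mathds{1}_A=1-\mathds{1}_{\bar A}$ to rewrite \eqref{eq.erm_prod_ind} as $R_{max}-\sum_k\pi_k^i$ plus a sum of per-node maxima, and then collect terms by path. I expect the partition-based argument above to be the cleaner of the two, with the disjointness and covering of the off-path subtrees being the only delicate bookkeeping, and the nonnegativity of the $\pi_l^i$ being the essential ingredient that makes the correct path the maximizer rather than merely a candidate.
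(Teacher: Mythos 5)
Your proof is correct, but it takes a genuinely different route from the paper's. The paper mirrors the proof of Theorem~\ref{thm.erm_reform}: it rewrites each product $G_k$ in \eqref{eq.erm_prod_ind} as a minimum of indicators, applies $\mathds{1}_{A}=1-\mathds{1}_{\bar{A}}$ to turn each minimum into a maximum, and then asserts --- ``by examination'' --- that the resulting sum of $K$ per-leaf maxima (of which exactly one equals $0$ and the remaining $K-1$ equal $1$) collapses into the single maximization of \eqref{eq.erm_tree}. That final collapsing step, which is where the real combinatorial content lives, is left essentially unargued in the paper. Your pointwise verification supplies precisely this missing content: you fix the realized leaf $k^\star$, show the active term equals $\sum_{l\neq k^\star}\pi_l^i$ via the fact that the off-path subtrees hanging off the path to $k^\star$ partition the remaining leaves, and then prove that every other term is dominated by a deepest-common-ancestor comparison. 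Notably, your argument makes explicit that nonnegativity of the savings, $\pi_l^i\geq 0$, is what forces the realized path's term to be the maximizer rather than merely one candidate --- a hypothesis the paper's proof (and the sketch after Theorem~\ref{thm.erm_reform}, which says only that ``the term corresponding to a valid path encoding will be the maximizer'') uses implicitly but never states. The trade-off: the paper's algebraic route is shorter and uniform with Theorem~\ref{thm.erm_reform}, while yours is self-contained, rigorous where the paper waves hands, and isolates the two structural facts (off-path subtree partition, nonnegative weights) on which the identity actually rests.
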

The proof of this corollary is included in the Appendix and follows the same steps as Thm. \ref{thm.erm_reform}.

The empirical risk in \eqref{eq.erm_tree} represents a scan over the paths to each leaf ($k=1,\ldots,K$), and the active term in the maximization corresponds to the leaf to which an observation is assigned by the decision functions $g_1,\ldots,g_{K-1}$. An important observation is that each term in the max in \eqref{eq.erm_tree} is a linear combination of indicators instead of a product as in \eqref{eq.erm_prod_ind}. This transformation enables us to upper-bound each indicator function with a convex surrogate, $\phi(z)$:
$\phi[g_j(\bx)] \geq \indicator{g_j(\bx) >0}~,~\phi[-g_j(\bx)] \geq \indicator{g_j(\bx) \leq 0}$
. And the result is a novel convex upper-bound on the empirical risk in \eqref{eq.erm_tree}. We denote this risk as ${R}_\phi(\bg)$. And the optimization problem over a set of training examples, $\{\bx_i,y_i\}_{i=1}^N$ and a family of decision functions $\mathcal{G}$:
%\begin{align}
$
\min_{\bg \in \mathcal{G}} \sum_{i=1}^{N}{R}_\phi\left(\bg,\bx_i,y_i\right) \label{eq.erm_min}
$.
%\end{align}
%
\subsection{Linear Programming}
There are many valid choices for the surrogate $\phi(z)$. However, if a hinge loss is used as an upper bound and $\mathcal G$ is a family of linear functions of the data then the optimization problem in \eqref{eq.erm_min} becomes a linear program (LP).
\begin{prop}\label{prop.lp_tree}
For $\phi(z)=\max(1-z,0)$ and linear decision functions $g_1,\allowbreak \ldots, \allowbreak g_{K-1}$, the minimization in \eqref{eq.erm_min} is equivalent to the following linear program:

\begin{align}
&\min_{\substack{g_1,\ldots,g_{K-1},\gamma^1,\ldots,\gamma^N \\ \alpha_1^1,\ldots,\alpha_{K-1}^N,\beta_1^1, \ldots, \beta_{K-1}^N}}\sum_{i=1}^{N} \gamma^i  \label{eq.tree_lp}\\
\text{subject to:}\notag\\
&\gamma^i\geq \mathbf w^i_{p,k} \begin{bmatrix}\alpha_1^i\\ \vdots \\ \alpha_{K-1}^i \end{bmatrix} + \mathbf w^i_{n,k} \begin{bmatrix} \beta_1^i \\ \vdots \\ \beta_{K-1}^i \end{bmatrix}, \,\,\,\, \begin{matrix}i \in [N]\\k \in [K]\end{matrix},\notag\\
&\begin{matrix}
1+g_j(\bx_i)\leq \alpha_j^i,\\ 
1-g_j(\bx_i)\leq \beta_j^i,\\
\alpha_j^i\geq 0,\beta_j^i\geq 0,\\
\end{matrix} \,\,\,\,\,\, \begin{array}{l} i \in [N]\\k \in [K-1]\end{array}.\notag
\end{align}
%\begin{align}
%&\min_{\substack{g_1,\ldots,g_{K-1},\gamma^1,\ldots,\gamma^N \\ \alpha_1^1,\ldots,\alpha_{K-1}^N,\beta_1^1, \ldots, \beta_{K-1}^N}}\sum_{i=1}^{N} \gamma^i ~~\text{subject to:} \label{eq.tree_lp} \\
%&\gamma^i\geq \mathbf w^i_{p,k} \begin{bmatrix}\alpha_1^i\\ \vdots \\ \alpha_{K-1}^i \end{bmatrix} + \mathbf w^i_{n,k} \begin{bmatrix} \beta_1^i \\ \vdots \\ \beta_{K-1}^i \end{bmatrix}, i \in [N], ~k \in [K] \notag\\
%&1+g_j(\bx_i)\leq \alpha_j^i,~~1-g_j(\bx_i)\leq \beta_j^i,~~~~\alpha_j^i\geq 0,~~\beta_j^i\geq 0, \notag\\ & j \in [K-1], i \in [N]  \notag
%\end{align}
\end{prop}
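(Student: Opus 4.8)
The plan is to obtain the convex surrogate risk explicitly and then linearize it through the standard epigraph construction, so that the only substantive content is a monotonicity argument certifying that the relaxation is tight. First I would substitute the hinge $\phi(z)=\max(1-z,0)$ into the reformulated risk of Corollary \ref{cor:tree_emp_risk}. Since $\phi(g_j(\bx_i))=\max(1-g_j(\bx_i),0)$ upper bounds $\mathds{1}_{g_j(\bx_i)\leq 0}$ and $\phi(-g_j(\bx_i))=\max(1+g_j(\bx_i),0)$ upper bounds $\mathds{1}_{g_j(\bx_i)>0}$, this yields the surrogate ${R}_\phi(\bg,\bx_i,y_i)=R_{max}-\sum_k\pi_k^i+\max_{k\in[K]}\big[\mathbf w^i_{p,k}\boldsymbol\phi^+_i+\mathbf w^i_{n,k}\boldsymbol\phi^-_i\big]$, where $\boldsymbol\phi^+_i$ and $\boldsymbol\phi^-_i$ collect the two families of hinge values over the $K-1$ internal nodes. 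The additive term $R_{max}-\sum_k\pi_k^i$ is independent of $\bg$, so minimizing $\sum_i{R}_\phi$ over $\bg$ has the same minimizers as minimizing $\sum_i\max_k[\cdots]$, which is exactly what the LP objective $\sum_i\gamma^i$ encodes (the two optimal values differing only by this fixed constant).

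Next I would introduce the epigraph variables precisely as they appear in \eqref{eq.tree_lp}. For each example $i$ and internal node $j$ I let $\alpha_j^i$ stand for $\max(1+g_j(\bx_i),0)$ and $\beta_j^i$ for $\max(1-g_j(\bx_i),0)$; the epigraph of a maximum of two affine pieces is a pair of linear inequalities, giving $\alpha_j^i\geq 1+g_j(\bx_i)$, $\alpha_j^i\geq 0$ and $\beta_j^i\geq 1-g_j(\bx_i)$, $\beta_j^i\geq 0$. For each example $i$ I introduce $\gamma^i$ to stand for the inner $\max_k$; since a maximum of finitely many quantities is its least upper bound, I replace it by the $K$ constraints $\gamma^i\geq \mathbf w^i_{p,k}\boldsymbol\alpha^i+\mathbf w^i_{n,k}\boldsymbol\beta^i$, $k\in[K]$, where $\boldsymbol\alpha^i=(\alpha_1^i,\dots,\alpha_{K-1}^i)$ and $\boldsymbol\beta^i$ is defined analogously. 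Because $g_j$ is restricted to be linear in $\bx$, every occurrence of $g_j(\bx_i)$ is affine in the decision parameters, so all constraints and the objective $\sum_i\gamma^i$ are linear, which is precisely the program in \eqref{eq.tree_lp}.

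The step requiring care, and the crux of the argument, is showing this relaxation is tight: that at every optimizer the auxiliary variables attain the values they were meant to represent, so the LP optimum genuinely equals the surrogate-risk minimum rather than an underestimate. The argument rests on the sign of the weights. From \eqref{eq.savings} each saving $\pi_k^i=\mathds{1}_{f_k(\bx_i)=y_i}+\alpha\sum_{m\in S_k^C}c_m\geq 0$, and each entry of $\mathbf w^i_{p,k}$ and $\mathbf w^i_{n,k}$ is a sum of such savings gated by the $\{0,1\}$ entries of $\mathbf P,\mathbf N$, hence nonnegative. Consequently the right-hand sides of the $\gamma^i$ constraints are nondecreasing in every $\alpha_j^i$ and $\beta_j^i$, while $\gamma^i$ enters $\sum_i\gamma^i$ with positive coefficient; therefore any feasible point can be improved, or left unchanged, by lowering each $\alpha_j^i,\beta_j^i$ to $\max(1\pm g_j(\bx_i),0)$ and then each $\gamma^i$ to the corresponding maximum. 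At a minimizer these lower envelopes are thus attained, so $\gamma^i={R}_\phi(\bg,\bx_i,y_i)-\big(R_{max}-\sum_k\pi_k^i\big)$ and the LP computes exactly $\min_{\bg}\sum_i{R}_\phi$ up to the additive constant. I expect this monotonicity and nonnegativity verification to be the main obstacle; the remaining manipulations are the routine epigraph reformulation.
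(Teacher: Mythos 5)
Your proposal is correct and follows essentially the same route as the paper's own proof: substitute the hinge surrogate into the risk of Corollary \ref{cor:tree_emp_risk}, drop the additive constant, and apply the standard epigraph construction with $\gamma^i$ for the outer maximum over leaves and $\alpha_j^i,\beta_j^i$ for the two hinge losses. Your explicit tightness argument---that nonnegativity of the savings $\pi_k^i$, and hence of the weights $\mathbf w^i_{p,k},\mathbf w^i_{n,k}$, guarantees the auxiliary variables attain their defining values at any optimum---is left implicit in the paper's appendix, so your write-up is in fact the more complete version of the same proof.
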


We introduce the variable $\gamma^i$ for each example $\bx_i$ to convert from a maximization over leaves to a set of linear constraints. Similarly, the maximization within each hinge loss is converted to a set of linear constraints. The variables $\alpha_j^i$ upper-bound the indicator $\mathds{1}_{g_j(x_i)> 0}$ and the variables $\beta_j^i$ upper-bound the indicator $\mathds{1}_{g_j(x_i)\leq 0}$. Additionally, the constant terms in the risk are removed for notational simplicity, as these do not effect the solution to the linear program. For details please refer to Appendix.

{\bf Complexity:} Linear programming is a relatively well-studied problem, with efficient algorithms previously developed. Specifically, for $K$ leaves, $N$ training points, and a maximum feature dimension of $D$, we have $O(KD+KN)$ variables and $O(KN)$ constraints. The state of the art primal-dual methods for LP are fast in practice, with an expected number of iterations $O(\sqrt{n} \log n)$, where $n$ is the number of variables \cite{LP_probabilisty_complexity}.

\section{Experiments}
\begin{table}
% table caption is above the table
\caption{Small dataset descriptions}
\label{table.dataset_info}       % Give a unique label
% For LaTeX tables use
\begin{tabular}{llllll}
\hline\noalign{\smallskip}
Name & Classes & Stage 1 & Stage 2 & Stage 3 & Stage 4\\
\noalign{\smallskip}\hline\noalign{\smallskip}
letter & 26 & Pixel Count & Moments & Edge Features & -\\
landsat & 6 & Band 1 & Band 2 & Band 3 & Band 4\\
SUN Mech. Turk & 16 & Function & Materials & Surf./Spat. Prop.& -\\
Image Seg. & 7 & Location  & Pixel Int. & Color & $\cdots$ \\
\noalign{\smallskip}\hline
\end{tabular}
\end{table}

%\vskip -2ex
%\begin{table}
%\addtolength{\tabcolsep}{-3pt}
%\begin{tabular}{|c|c|c|c|c|c|}
%\hline
%  & Myopic & AM~\cite{trapeznikov:2013b}& LP & \begin{tabular}{c}AM~\cite{trapeznikov:2013b} \\ Train\\ Time(sec)\end{tabular}& \begin{tabular}{c}LP \\ Train\\ Time(sec)\end{tabular}\\
%\hline
%1. & $73\%$ & $48\%$ & $49\%$ &$93.56$ & $57.03$   \\
%\hline
%2. & $100\%$ & $80\%$ & $82\%$   & $6.28$ & $1.03$\\
%\hline
%3. & $100\%$ & $75\%$ & $75\%$  & $186.0$ &$108.7$ \\
%\hline
%4. & $99\%$ & $90\%$ & $90\%$  & $2818.9$ & $71.08$\\
%\hline
%5. & $56\%$ & $21\%$ & $26\%$  & $46.26$ & $16.46 $\\
%\hline
%\end{tabular}
%\caption{\footnotesize Average percentage of the budget required to achieve a desired error rate chosen to be close to the error achieved using the entire set of features (approximately $95\%$ of the improvement gained using all features compared the initial features). The target errors are: (1.) $40\%$, (2.) $21.5\%$, (3.) $15\%$, (4.) $40.4\%$, and (5.) $9\%$. The percentage of the budget required is with respect to the maximum budget. The training time is the amount of time (in seconds) required to learn a policy for a fixed budget trade-off parameter $\alpha$.}
%%
%\label{table.performance}
%\end{table}

We demonstrate performance of our proposed approach on two types of real world data sets. First, we demonstrate performance of the LP formulation on examples with fixed structures. In these cases, sensor subsets do not need to be learned and an exhaustive tree can be constructed over all subsets of sensors. We compare the performance of our approach to the alternating optimization scheme presented in \cite{trapeznikov:2013b} applied to the same tree, demonstrating efficiency and performance of our LP formulation. Next, we apply our proposed approach to data sets where dimensionality prevents exhaustive search through feature subsets and both feature subsets and tree structure must be learned along with decision functions in the tree. We show performance on classification data sets presented in \cite{kusner2014feature} and compare performance with Cost Sensitive Trees of Classifiers (CSTC) \cite{xu2013cost} and Approximately Submodular Trees of Classifiers (ASTC) \cite{kusner2014feature}.

For all feature subset classifiers used in our proposed approach and the alternating optimization approach, classifiers are trained using logistic loss on $2^{nd}$-order homogeneous polynomial expanded basis on the entire training set.

\textbf{Learning Decision Functions in Fixed Structures:} Fig. \ref{fig:exp_perf} shows performance of our proposed approach on 5 data sets where few sensors are used and fixed structure can be easily found. For the letter, landsat, and SUN Mechanical Turk data sets \cite{uci_repository,SUN_mechanical_turk}, a tree can be easily constructed using all possible feature subsets. For the image segmentation data set, \cite{uci_repository}, we fix a greedily constructed tree with 8 leaves.

For comparison, we use the alternating minimization approach proposed in \cite{trapeznikov:2013b}. Additionally, we also show performance of a simple myopic strategy for a baseline comparison on these example. The LP approach generally performs comparably to the non-convex alternating optimization approach. Additionally, as shown in Table \ref{table.performance}, the LP is dramatically faster during training time with comparable performance to the alternating training approach.\footnote{All computations were performed on an Intel I5 M430 CPU @ 2.27 GHz with 4 cores.} Note that we do not compare performance to ASTC or CSTC for these experiments. The purpose of these examples is to show the efficacy of the LP for training sequential decision functions independent of feature subsets, classifier design, or tree structure. Both CSTC and ASTC simultaneously learn the feature subsets, classifiers, and decision functions, limiting comparison.

\begin{figure*}[htb!]
\centering
\subfigure[letter]{\includegraphics[width=.49\linewidth]{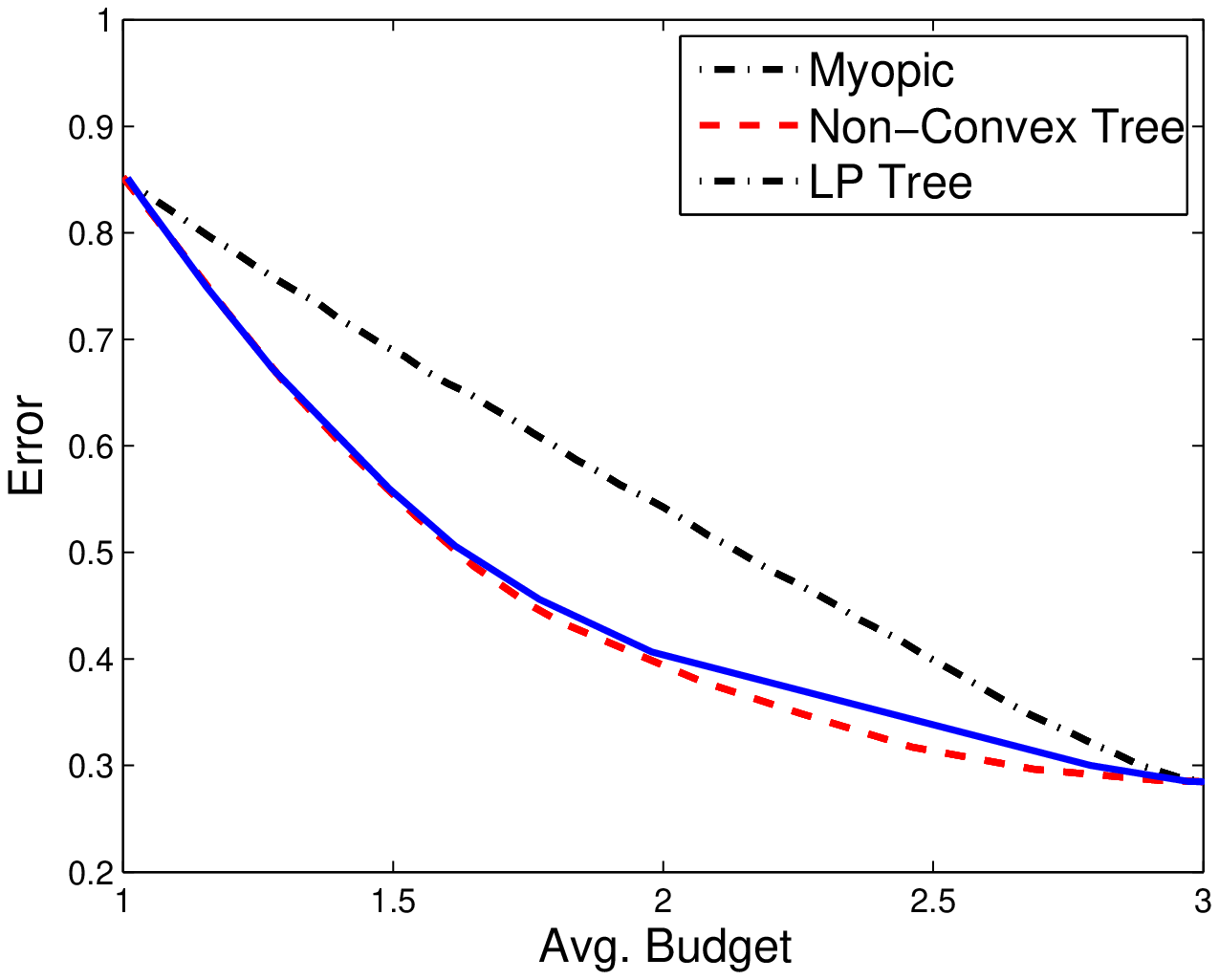}}
\subfigure[landsat]{\includegraphics[width=.49\linewidth]{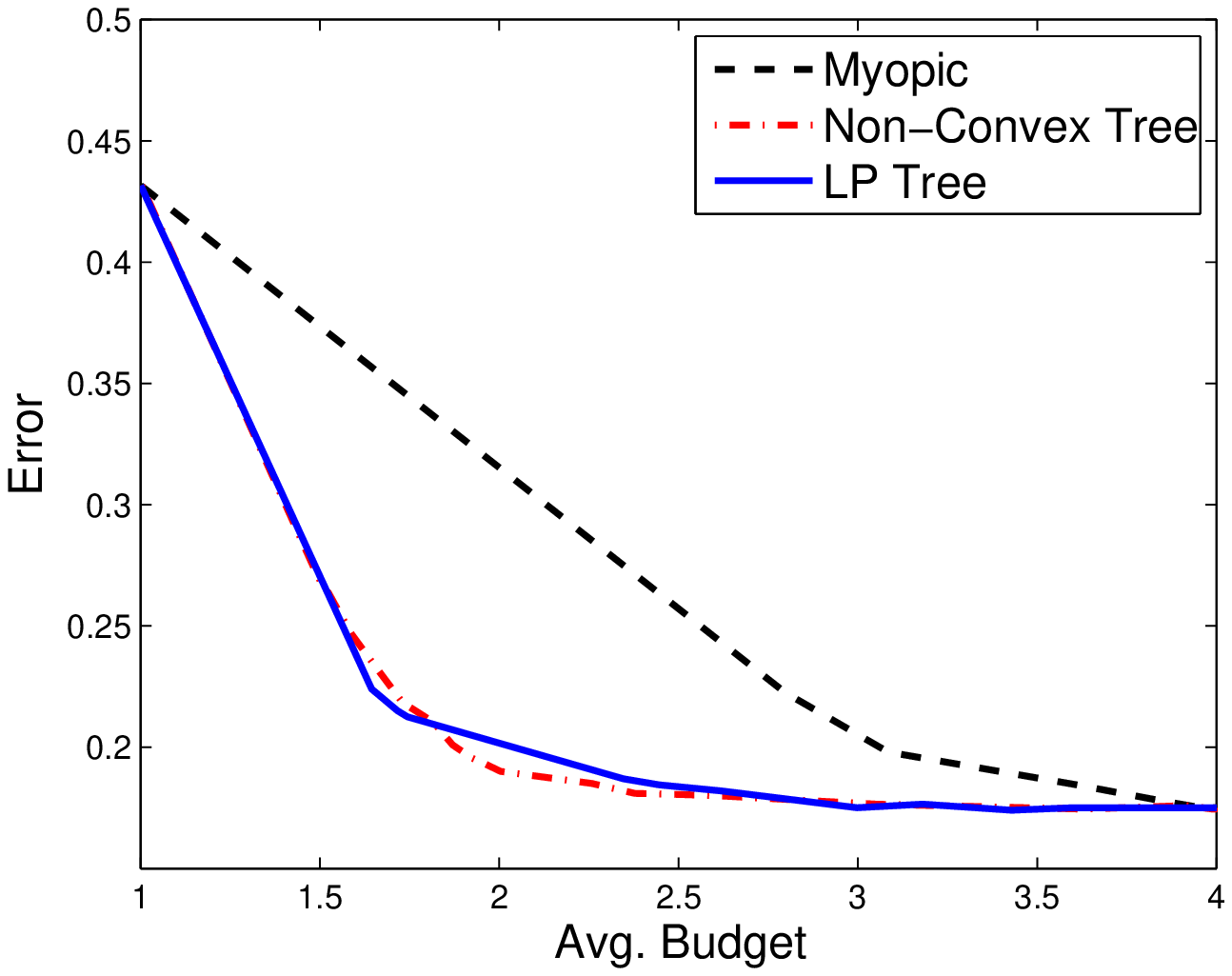}}\\
\subfigure[Image Seg.]{\includegraphics[width=.49\linewidth]{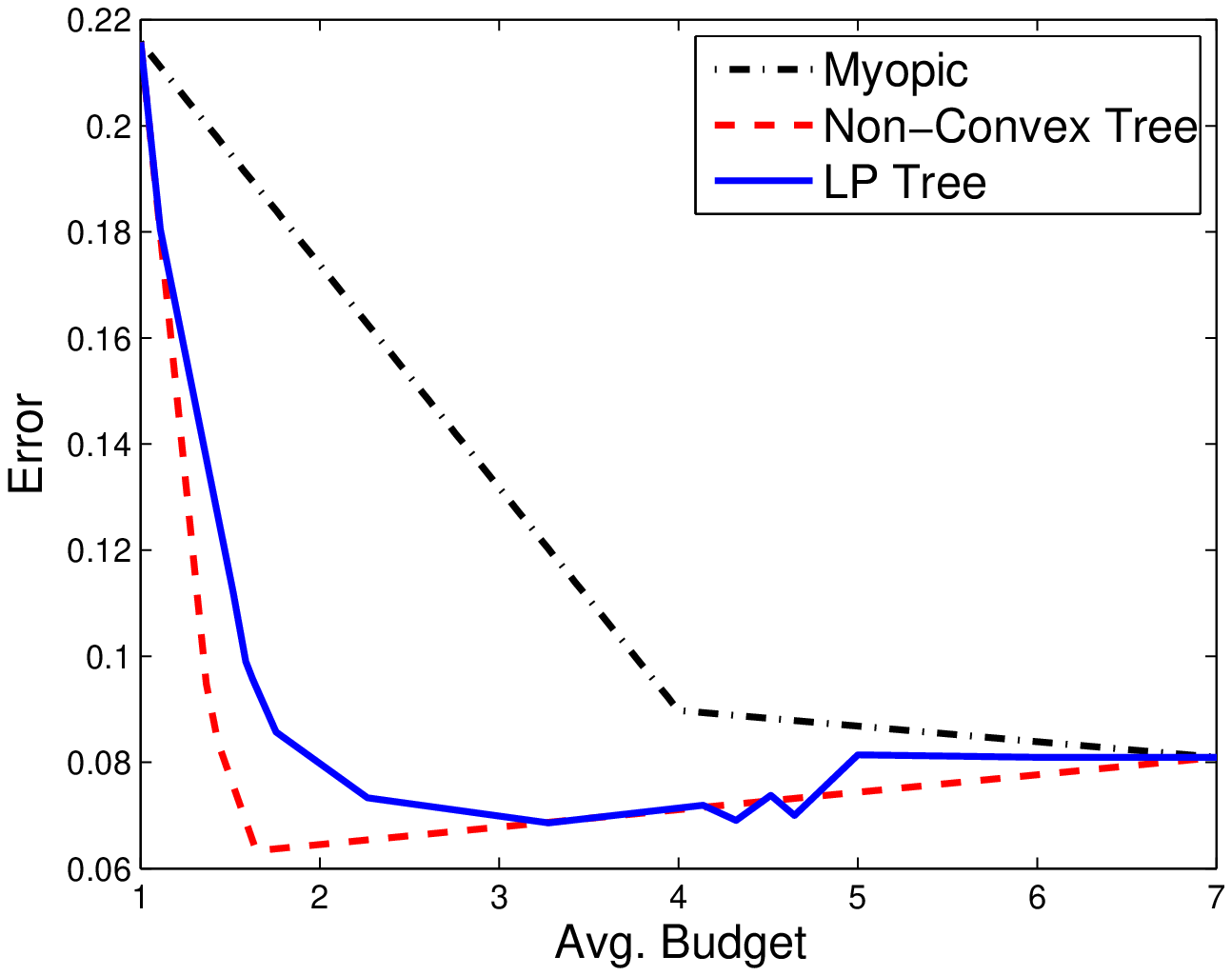}}
\caption{Comparison of error vs. average budget trade-off between a myopic, AM~\cite{trapeznikov:2013b}, our LP method. LP clearly out performs the myopic approach, and generally matches or exceeds the non-convex approach with the added benefit of reduced computational cost, repeatability, and guaranteed convergence.}
\label{fig:exp_perf}
\end{figure*}

\begin{table}
% table caption is above the table
\caption{Average percentage of the budget required to achieve a desired error rate chosen to be close to the error achieved using the entire set of features (approximately $95\%$ of the improvement gained using all features compared the initial features). The percentage of the budget required is with respect to the maximum budget. The training time is the amount of time (in seconds) required to learn a policy for a fixed budget trade-off parameter $\alpha$.}
\label{table.performance}       % Give a unique label
% For LaTeX tables use
\begin{tabular}{lllllll}
\hline\noalign{\smallskip}
Dataset  &\begin{tabular}{c}Target\\ Errors\end{tabular}& Myopic & AM & LP & \begin{tabular}{c}AM Train\\ Time(sec)\end{tabular}& \begin{tabular}{c}LP Train\\ Time(sec)\end{tabular}\\
\noalign{\smallskip}\hline\noalign{\smallskip}
letter &$40\%$& $73\%$ & $48\%$ & $49\%$ &$93.56$ & $57.03$   \\
landsat &$15\%$& $100\%$ & $75\%$ & $75\%$  & $186.0$ &$108.7$ \\
SUN Mech. Turk &$40.4\%$& $99\%$ & $90\%$ & $90\%$  & $2818.9$ & $71.08$\\
Image Seg. &$9\%$& $56\%$ & $21\%$ & $26\%$  & $46.26$ & $16.46 $\\
\noalign{\smallskip}\hline
\end{tabular}
\end{table}
%\begin{table}[h]%{\linewidth}
%\addtolength{\tabcolsep}{-3pt}
%\begin{tabular}{|c|c|c|c|c|c|c|}
%\hline
%Dataset & \begin{tabular}{c}Target\\Error\end{tabular} & Myopic & AM~\cite{trapeznikov:2013b}& LP & \begin{tabular}{c}AM~\cite{trapeznikov:2013b} \\ Train\\ Time(sec)\end{tabular}& \begin{tabular}{c}LP \\ Train\\ Time(sec)\end{tabular}\\
%\hline
%letter & $.4$ & $73\%$ & $48\%$ & $49\%$ &$93.56$ & $57.03$ \\
%\hline
%pima & $.215$ & $100\%$ & $78\%$ & $80\%$   & $6.28$ & $1.03$\\
%\hline
%landsat & $.15$ & $100\%$ & $75\%$ & $75\%$  & $186.0$  & $108.7$\\
%\hline
%SUN & $.405$ & $99\%$ & $90\%$ & $90\%$  & $2818.9$ & $71.08$\\
%\hline
%image segm. & $.09$ & $56\%$ & $21\%$ & $26\%$  & $46.26$ & $16.46 $\\
%\hline
%\end{tabular}
%\caption{Average percentage of the budget required to achieve a desired error rate. The target rate is chosen to be close to the error achieved using the entire set of features (the target error rates are approximately $95\%$ of the improvement gained using all features compared to using only the initial features). The percentage of the budget required is with respect to the maximum budget. For example, if there are 3 stages and a budget of $50\%$ indicates that on average, each example gains one additional feature in order to achieve the target error rate. The training time is the amount of time (in seconds) required to learn a policy for a fixed budget trade-off parameter $\alpha$.}
%\label{table.performance}
%\end{table}

\textbf{Learning Tree Structure:} We next applied our proposed approach to three data sets where learning a tree over all feature subsets is infeasible. For all three datasets, we learn the set of subsets as described in Section \ref{sec.struct}, with a total of 16 subsets of features (leaves of the tree) used. For the MiniBooNE and forest data sets, the proposed approach outperforms CSTC, with performance exceeding ASTC for the forest data set and matching ASTC for MiniBooNE. On the CIFAR dataset, the proposed LP approach matches CSTC when using 50 features, but otherwise is generally outperformed by both CSTC and ASTC. We attribute this to the limited complexity ($2^{nd}$ order homogeneous polynomial) of the classification functions, which requires more features to gain flexibility to accurately partition the data.

\begin{figure*}[htb!]
\centering
\subfigure[Forest]{\includegraphics[width=.49\linewidth]{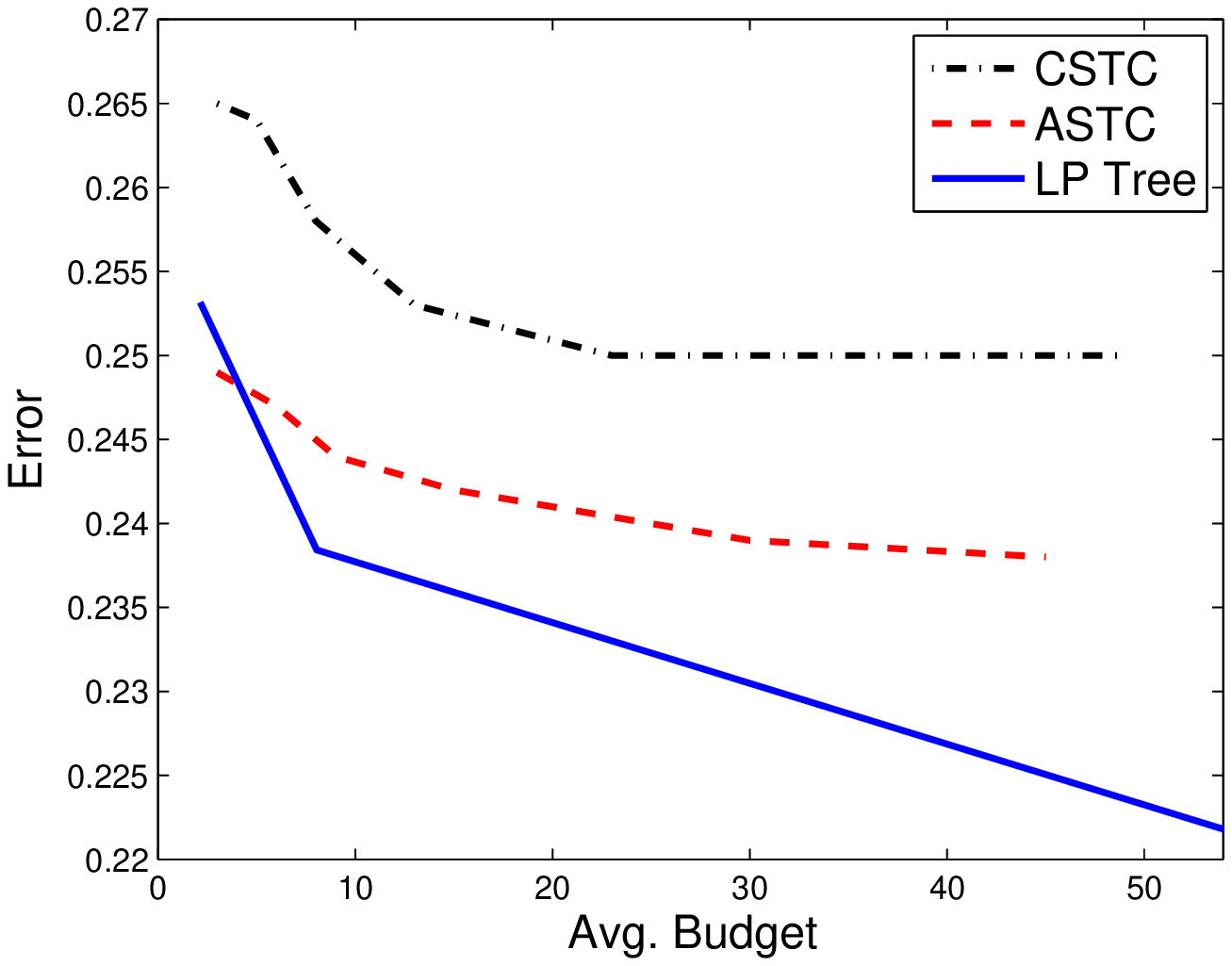}}
\subfigure[MiniBooNE]{\includegraphics[width=.49\linewidth]{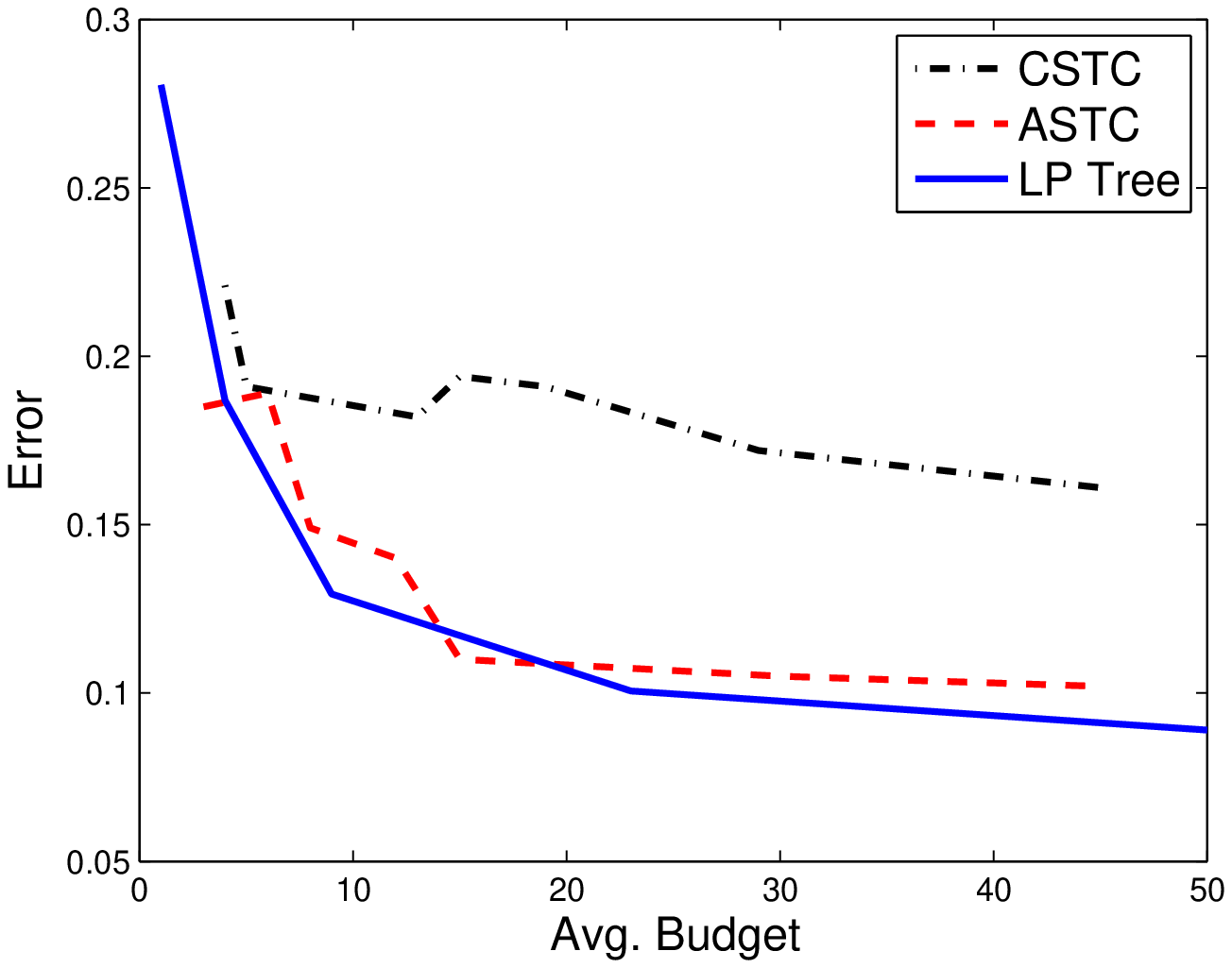}}\\
\subfigure[CIFAR]{\includegraphics[width=.49\linewidth]{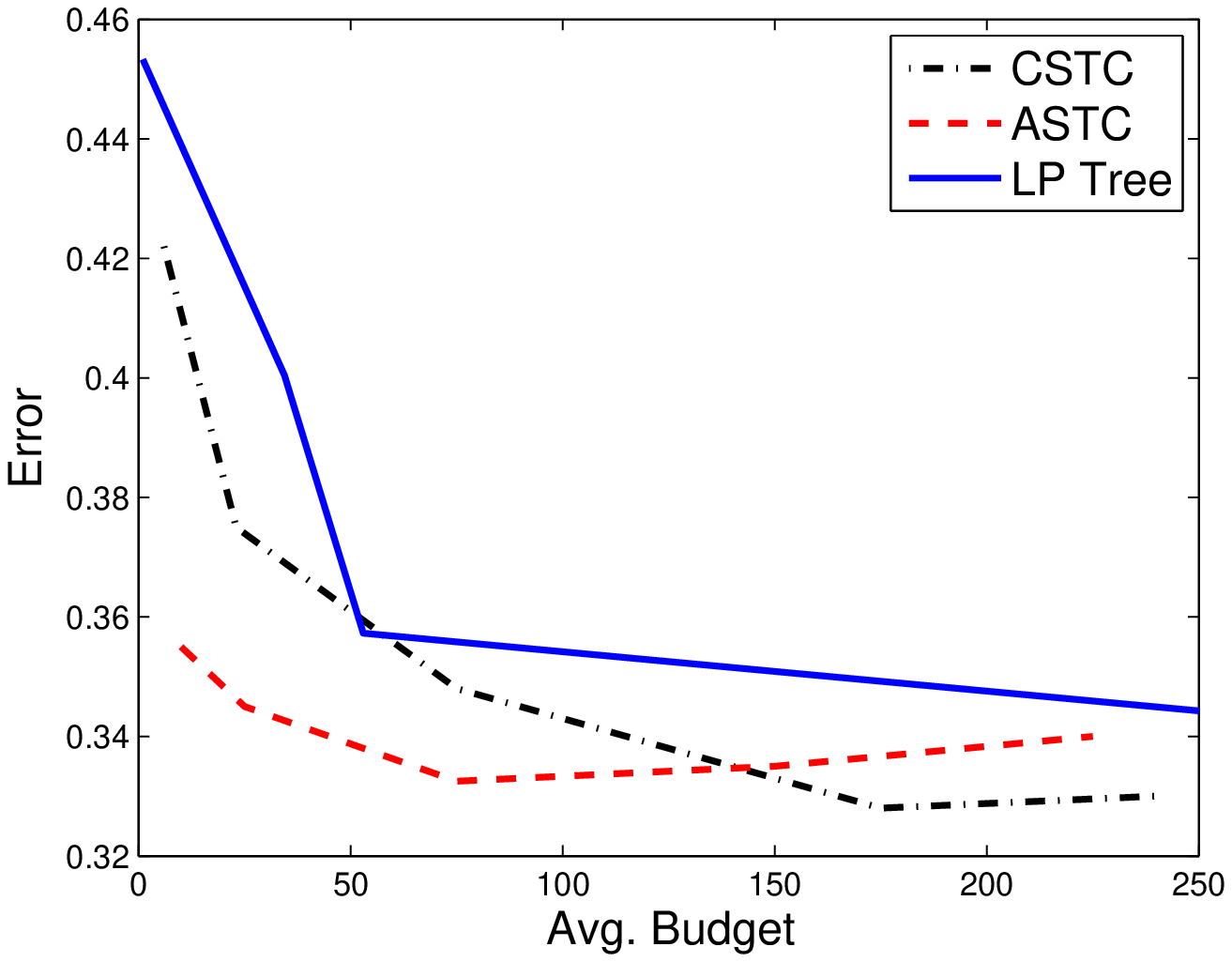}}
\caption{Plot of our proposed approach (LP tree) and CSTC on three real world data sets. On all three data sets, LP tree generally outperforms CSTC, producing high levels of classification accuracy with very low budgets.}
\label{fig:exp_perf2}
\end{figure*}

%\begin{acknowledgements}
%If you'd like to thank anyone, place your comments here
%and remove the percent signs.
%\end{acknowledgements}

% BibTeX users please use one of
\bibliographystyle{spbasic}      % basic style, author-year citations
\section{Conclusion}

\appendix
\section{Proof of Theorem \ref{thm.erm_reform}}
The product of indicators can be expressed as a minimization over the indicators, allowing the empirical loss to be expressed:
\begin{align}
R & \left(g_1,g_{21},g_{22},x_i,y_i\right)=\Bigg(R_{max} -\pi_1^i \min\left(\mathds{1}_{g_1(x_i)\leq 0},\mathds{1}_{g_{21}(x_i)\leq 0}\right)\notag\\
&-\pi_2^i \min\left(\mathds{1}_{g_1(x_i)\leq 0},\mathds{1}_{g_{21}(x_i)> 0}\right) -\pi_3^i \min\left(\mathds{1}_{g_1(x_i)> 0},\mathds{1}_{g_{22}(x_i)\leq 0}\right)\notag\\
&-\pi_4^i \min\left(\mathds{1}_{g_1(x_i)> 0},\mathds{1}_{g_{22}(x_i)> 0}\right)\Bigg).\notag
\end{align}
By swapping the inequalities in the arguments of the indicator functions, the minimization functions can be converted to maximization functions:
\begin{align}
R & \left(g_1,g_{21},g_{22},x_i,y_i\right)=R_{max}
+\pi_1^i \max\left(\mathds{1}_{g_1(x_i)> 0},\mathds{1}_{g_{21}(x_i)> 0}\right)-\pi_1^i\notag\\
& +\pi_2^i \max\left(\mathds{1}_{g_1(x_i)> 0},\mathds{1}_{g_{21}(x_i)\leq 0}\right)-\pi_2^i +\pi_3^i \max\left(\mathds{1}_{g_1(x_i)\leq 0},\mathds{1}_{g_{22}(x_i)> 0}\right)\notag\\
& -\pi_3^i+\pi_4^i \max\left(\mathds{1}_{g_1(x_i)\leq 0},\mathds{1}_{g_{22}(x_i)\leq 0}\right)  -\pi_4^i\Bigg).\notag
\end{align}
Note that due to the dependence of the indicators, there will always be 3 maximization terms equal to 1 and 1 maximization term equal to zero. As a result, the sum of maximizations can be expressed as a maximization over the 4 possible combinations, yielding the expression:
\begin{align}
R & \left(g_1,g_{21},g_{22},x_i,y_i\right)=\Bigg(R_{max}-\pi_1^i -\pi_2^i -\pi_3^i -\pi_4^i \notag\\
&+\max \Big((\pi_3^i+\pi_4^i)\mathds{1}_{g_1(x_i)\leq 0}+\pi_2^i\mathds{1}_{g_{21}(x_i)\leq 0},(\pi_3^i+\pi_4^i)\mathds{1}_{g_1(x_i)\leq 0}+\pi_1^i\mathds{1}_{g_{21}(x_i)> 0},\notag\\
&(\pi_1^i+\pi_2^i)\mathds{1}_{g_1(x_i)> 0}+\pi_4^i\mathds{1}_{g_{21}(x_i)\leq 0},(\pi_1^i+\pi_2^i)\mathds{1}_{g_1(x_i)> 0}+\pi_3^i\mathds{1}_{g_{21}(x_i)> 0}\Big)\Bigg).\notag
\end{align}

\section{Proof of Corollary \ref{cor:tree_emp_risk}}
The product of indicators over an arbitrary binary tree is given by:
\begin{align*}
&R(\bg,\bx_i,y_i)=\\
&\sum_{k=1}^{K} \overbrace{R_k(f_k,\bx_i,y_i)}^{\text{risk of leaf $k$}}\underbrace{\prod_{j=1}^{K-1}[\mathds{1}_{g_j(\bx_i)>0}]^{\mathbf P_{k,j}}[\mathds{1}_{g_j(\bx_i)\leq 0}]^{\mathbf N_{k,j}}.}_{\text{state of $G_k(\cdot)=$ $\bx_i$ in a tree}}
\end{align*}
Converting the product into a minimization over indicators, the function can be rewritten:
\begin{align*}
R(\bg,\bx_i,y_i)=\sum_{k=1}^{K} \left(R_{max}-\pi_k^i\right)\min_{j \in \{1,\ldots,K-1\}}\left([\mathds{1}_{g_j(\bx_i)>0}]^{\mathbf P_{k,j}},[\mathds{1}_{g_j(\bx_i)\leq 0}]^{\mathbf N_{k,j}}\right)
\end{align*}
and using the identity $\mathds{1}_{A}=1-\mathds{1}_{\bar{A}}$, this can be converted to the maximization:
\begin{align*}
R(\bg,\bx_i,y_i)=R_{max}-\sum_{k=1}^{K}\pi_k^i+\sum_{k=1}^{K} \pi_k^i\max_{j \in \{1,\ldots,K-1\}}\left([\mathds{1}_{g_j(\bx_i)\leq0}]^{\mathbf P_{k,j}},[\mathds{1}_{g_j(\bx_i)> 0}]^{\mathbf N_{k,j}}\right).
\end{align*}
As in the 2-region case, the dependence of the indicators always results in $K-1$ maximization terms equal to $1$ and 1 maximization term equal to $0$. By examination, the sum of maximization functions can be expressed as a single maximization over the paths of the leaves, resulting in a loss shown in \eqref{eq.erm_tree}.

\section{Additional Explanation of Prop. 4.1}
The linear program of Prop. \ref{prop.lp_tree} is constructed by replacing the indicators with hinge-losses of the appropriate signs:
\begin{align}
&\min_{\substack{g_1,\ldots,g_{K-1},\gamma^1,\ldots,\gamma^N \\ \alpha_1^1,\ldots,\alpha_{K-1}^N,\beta_1^1, \ldots, \beta_{K-1}^N}}\sum_{i=1}^{N} \gamma^i \\
\text{subject to:}\notag\\
&\gamma^i\geq \mathbf w^i_{p,k} \begin{bmatrix}\alpha_1^i\\ \vdots \\ \alpha_{K-1}^i \end{bmatrix} + \mathbf w^i_{n,k} \begin{bmatrix} \beta_1^i \\ \vdots \\ \beta_{K-1}^i \end{bmatrix}, \,\,\,\, \begin{matrix}i \in [N]\\k \in [K]\end{matrix},\notag\\
&\begin{matrix}
1+g_j(\bx_i)\leq \alpha_j^i,\\ 
1-g_j(\bx_i)\leq \beta_j^i,\\
\alpha_j^i\geq 0,\beta_j^i\geq 0,\\
\end{matrix} \,\,\,\,\,\, \begin{array}{l} i \in [N]\\k \in [K-1]\end{array}.\notag
\end{align}
Note that the linear program arises based on the fact that any maximization can be converted to a linear constraint with the introduction of a new variable. The maximization in the objective for each observation is replaced by the introduction of the variable $\gamma^i$ and the first constraint. The maximization functions in the hinge losses are replaced by the remaining constraints, introducing the variables $\alpha_j^i=\max(1+g_j(\bx_i),0)$ and $\beta_j^i=\max(1-g_j(\bx_i),0)$, respectively.
 
\bibliography{bibl}   % name your BibTeX data base

% Non-BibTeX users please use
%\begin{thebibliography}{}
%%
%% and use \bibitem to create references. Consult the Instructions
%% for authors for reference list style.
%%
%\bibitem{RefJ}
%% Format for Journal Reference
%Author, Article title, Journal, Volume, page numbers (year)
%% Format for books
%\bibitem{RefB}
%Author, Book title, page numbers. Publisher, place (year)
%% etc
%\end{thebibliography}

\end{document}